\title{A Theory of Dynamic Benchmarks}
\author{Ali Shirali \\
University of California, Berkeley \\
\AND 
Rediet Abebe \\
Harvard Society of Fellows \\
\AND
Moritz Hardt \\
Max-Planck Institute for Intelligent Systems, Tübingen \\
}
\def\eqref#1{equation~\ref{#1}}
\def\1{\bm{1}}
\def\vzero{{\bm{0}}}
\def\vone{{\bm{1}}}
\def\vf{{\bm{f}}}
\def\vg{{\bm{g}}}
\def\vh{{\bm{h}}}
\def\mD{{\bm{D}}}
\def\mP{{\bm{P}}}
\DeclareMathAlphabet{\mathsfit}{\encodingdefault}{\sfdefault}{m}{sl}
\SetMathAlphabet{\mathsfit}{bold}{\encodingdefault}{\sfdefault}{bx}{n}
\def\gA{{\mathcal{A}}}
\def\gD{{\mathcal{D}}}
\def\gH{{\mathcal{H}}}
\def\gK{{\mathcal{K}}}
\def\gP{{\mathcal{P}}}
\def\gX{{\mathcal{X}}}
\def\gY{{\mathcal{Y}}}
\newcommand{\E}{\mathbb{E}}
\newcommand{\R}{\mathbb{R}}
\DeclareMathOperator*{\argmin}{arg\,min}
\DeclareMathOperator{\sign}{sign}
\DeclareMathOperator*{\supp}{supp}
\DeclareMathOperator*{\mix}{mix}
\DeclareMathOperator*{\maj}{maj}
\DeclareMathOperator{\unif}{unif}
\newtheorem{thm}{Theorem}[section]
\newtheorem*{thm*}{Theorem}
\newtheorem{corollary}[thm]{Corollary}
\newtheorem*{corollary*}{Corollary}
\newtheorem*{lemma*}{Lemma}
\newtheorem{definition}[thm]{Definition}
\newtheorem*{definition*}{Definition}
\newtheorem*{conjecture*}{Conjecture}
\newtheorem{proposition}[thm]{Proposition}
\newtheorem*{proposition*}{Proposition}
\renewcommand{\bar}{\overline}
\newcommand{\One}{\mathbbm{1}}
\begin{document}

\maketitle


\begin{abstract}
Dynamic benchmarks interweave model fitting and data collection in an attempt to mitigate the limitations of static benchmarks. In contrast to an extensive theoretical and empirical study of the static setting, the dynamic counterpart lags behind due to limited empirical studies and no apparent theoretical foundation to date. Responding to this deficit, we initiate a theoretical study of dynamic benchmarking. We examine two realizations, one capturing current practice and the other modeling more complex settings. In the first model, where data collection and model fitting alternate sequentially, we prove that model performance improves initially but can stall after only three rounds. Label noise arising from, for instance, annotator disagreement leads to even stronger negative results. Our second model generalizes the first to the case where data collection and model fitting have a hierarchical dependency structure. We show that this design guarantees strictly more progress than the first, albeit at a significant increase in complexity. We support our theoretical analysis by simulating dynamic benchmarks on two popular datasets. These results illuminate the benefits and practical limitations of dynamic benchmarking, providing both a theoretical foundation and a causal explanation for observed bottlenecks in empirical work. 
\end{abstract}

\section{Introduction}

In response to concerns around the limitations of static datasets as benchmarks, researchers have proposed dynamic benchmarking---a setting where data collection and model building happen iteratively in tandem---as an alternative~\citep{anli,dynasent,dynabench,ma2021dynaboard,gehrmann2021gem}. In dynamic benchmarking, model builders fit models against the current dataset, while annotators contribute new data points selected to challenge previously built models. In doing so, the hope is that the iterative process results in a more diverse set of test cases that can help induce better model performance. 

Though proponents argue ``dynamic adversarial data collection, where annotators craft examples that challenge continually improving models, holds promise as an approach for generating such diverse training sets,'' there is also a recognition that ``the long-term benefits or drawbacks of adopting it as a core dataset creation paradigm remain poorly understood.''~\citep{anli_in_limit} A major concern is that adversarial data collection proliferates idiosyncratic examples that do well in fooling models but eliminate coverage of necessary yet easier test cases. This can, in turn, reduce dataset diversity and limit external validity~\citep{bowman2021will}.

A growing line of theoretical and empirical research on static benchmarks has improved our understanding of the strengths and limitations of this setting. In contrast, similar research on dynamic benchmarks has been limited. The high complexity and cost of studying live benchmarks impede experimental work. A stronger theoretical foundation for dynamic benchmarks could help guide empirical explorations of the vast design space, avoiding costly trial-and-error experiments. However there has been no apparent theory of dynamic benchmarking that could offer provable guarantees about their performance and clarify how issues such as label noise interact with this setting. 

\subsection{Our contributions}

In this work, we initiate a theoretical study of dynamic benchmarks. We contribute a versatile formal model of dynamic benchmarks that serve as the basis for our investigation. We start with a fundamental question: 

\begin{center}
{\bf Question 1:} \emph{Can we design dynamic benchmarks in such a way that models continue to improve as the number of rounds of data collection grows?}
\end{center}

We start from a theoretical model capturing existing implementations of dynamic benchmarking. This model proceeds in multiple rounds interweaving data collection and model building sequentially. In round~$t$, model builders face a distribution~$\gD_t$ and are tasked with finding a classifier~$h_t$ that performs well on $\gD_t$. We assume that model fitting succeeds in minimizing risk up to a positive classification error~$\epsilon>0$. We further assume that annotators succeed in identifying the failure cases of the current model~$h_t,$ giving us access to the uniform distribution~$\bar{\gD}_t$ over the error cases of the model~$h_t$. We determine the new distribution~$\gD_{t+1}$ by mixing~$\bar{\gD}_t$ and~$\gD_t$ in some proportion.

We assume a starting distribution~$\gD_0$ on the instances of interest. We can think of~$\gD_0$ as the distribution corresponding to standard data collection. Mirroring the motivation for dynamic benchmarking, this distribution might assign little to no weight to important families of instances. In particular, an error set of measure $\epsilon$, which we assume we can achieve from the get-go,
might contain many relevant instances. The goal is therefore to converge to well below the $\epsilon$-error level guaranteed by the above assumption. We assume the distribution admits a perfect classifier so that process could, in principle, converge to $0$~error.

In this setting, we show that three rounds are guaranteed to converge to $O(\epsilon^2)$ error. Unfortunately, this is where it ends. In general, there is no reason to expect this dynamic benchmark to progress below $\Omega(\epsilon^2)$ error. Put differently, there is no provable benefit to dynamic data collection beyond three rounds. The cause of our negative result is a form of catastrophic forgetting that mirrors the concerns quoted earlier. As the benchmark moves beyond three rounds, there is provably no way to retain knowledge of instances correctly classified at earlier stages. 

Furthermore, we show through experiments that this lower bound may also be encountered in practice, preventing dynamic benchmarks from progressing beyond a small number of rounds. In doing so, we propose a concrete way to simulate the performance of a dynamic benchmark that may be of independent interest in the empirical study of dynamic benchmarks. 

There is yet another impediment to successful dynamic benchmarking: Above we considered the case where the underlying learning problem is \emph{realizable}, meaning that there exists a model that achieves $0$~error on the distribution. In practice, \emph{unrealizable} settings where we have label noise are commonplace. Unrealizability can result from, for instance, annotator disagreement where there is an emerging line of work aiming to understand their impact on data diversity, label noise, and model performance. We show that in this unrealizable setting, dynamic benchmarks concentrate on mislabeled instances, losing their representativeness of the underlying distribution.

Though pessimistic, the above negative results may be inherent to the simple sequential design of dynamic benchmarks currently used in practice. To further probe this issue, we ask: 

\begin{center}
{\bf Question 2:} \emph{Are there more sophisticated dynamic benchmark designs that can guarantee convergence below the error barrier of the standard setting?}
\end{center}

We answer this question in the affirmative by considering a hierarchical model, which recursively uses the above sequential setting as a building block. In this setting, the organizer of a benchmark creates multiple instances of dynamic data collection and combines the outcomes in a particular way, e.g., by ensembling the resulting models and feeding the output into a new instance. We study the setting where the hierarchy has depth two and show that this setting guarantees convergence to error $O(\epsilon^3),$ providing a strict separation with the standard model. Despite the improved performance, this depth-two setting significantly complicates the benchmarking process, and executing a design with further depth may be prohibitive in practice. 

In search of alternative designs that can consistently improve the model's performance, we study a complementary design to dynamic benchmarks in Section~\ref{sec:gradient_based_update} of Appendix.
Instead of accumulating adversarial examples in a dynamic benchmark, here the model-in-the-loop carries the information by directly using new examples and becoming more complex throughout the process. We also make a natural connection to boosting methods. Despite achieving zero risk theoretically, this alternative has limited applicability due to either slow convergence or computational infeasibility.

In sum, our results indicate that current bottlenecks observed in empirical settings under the sequential model are inherent to the set-up. This can alert practitioners to the limitations of current practice before many rounds of data are collected. Further, more complex designs such as the hierarchical setting can result in improved performance but may suffer from the organizational complexity of data collection. Combined, these results highlight stark tradeoffs in switching from static to dynamic settings and suggest that exploration of the design space for modeling dynamic benchmarks can play an important role.

\subsection{Related works}
For an introduction to datasets as benchmarks, see Chapter~8 in~\citet{hardtrecht}. Concerns around static benchmarks are summarized in recent works, including adaptivity~\citep{dwork2015preserving}, violation of sample independence in sequentially generated data~\citep{shirali2022sequential}, and issues of annotator disagreement~\citep{pavlick2019inherent,prabhakaran2021releasing,davani2022dealing}. 

Numerous new benchmarks and benchmarking systems have recently been proposed that integrate some aspects of dynamic data collection. Adversarial data collection continually adds challenging examples found by annotators for an existing model~\citep{builditbreakit, anli, dynabench, dynasent, anli_in_limit}. Empirical studies show this does not necessarily lead to better performance or robustness~\citep{kaushik2021efficacy, anli_in_limit}. A dynamic leaderboard periodically renews the test set~\citep{turingadvice}. In response to the fast growth of dynamic benchmarking, various tools and platforms are also developed. For example, platforms for adversarial data collection~\citep{dynabench}, assistance of annotators to find challenging examples~\citep{bartolo2021models}, personalized benchmarking~\citep{narayan2021personalized}, and automatic crowdsourcing of leaderboard submissions~\citep{khashabi2021genie}.
Adversarial filtering, which filters out examples from a static dataset that are identified to be easy for a given model, is another related technique~\citep{paperno2016lambada, swag, le2020adversarial}. Such datasets are susceptible to being biased~\citep{phang2021adversarially} or saturate faster than static datasets~\citep{taori2020measuring}. 
\citet{le2020adversarial} include theoretical considerations on eliminating bias. The flurry of newly minted benchmarks stands in stark contrast with the scarcity of theory on the topic. 
Our work was inspired by the thought-provoking discussion of dynamic benchmarks by~\citet{bowman2021will}.

\section{Problem formulation}
\label{sec:problem_formulation}

Our primary goal in this work is to understand the population-level dynamics that a benchmark design induces. We are centrally interested in capturing what the iterative process of model building and data collection converges to. Consequently, we ignore finite sample issues in our formulation and focus on distributions rather than samples. We assume that model builders successfully minimize risk approximately. While risk minimization may be computationally hard in the worst case, this assumption reflects the empirical reality that machine learning practitioners seem to be able to make consistent progress on fixed benchmarks. While we focus on population-level dynamics in the design of benchmarks, we restrict ourselves to operations with standard finite sample counterparts. 

A \emph{dynamic benchmark design} can be represented as a directed acyclic graph where the nodes and edges correspond to classifiers, distributions, and the operations defined below: 
\begin{enumerate}[leftmargin=*]
    \item Model building: Given a distribution, find an approximate risk minimizer.
    \item Data collection: Given a model, find a new distribution.
    \item Model combination: Combine a set of models into a single model. 
    \item Data combination: Combine a set of distributions into a single distribution. 
\end{enumerate}
We describe these operations in turn. First, we explain model building by defining the notion of risk and risk minimization. The \emph{risk} of a classifier~$h\colon \gX \to \gY$ on a distribution~$\gP$ supported on the data universe~$\gX \times \gY$ with respect to the zero-one loss is defined as
\[
R_\gP(h) = \E_{(x, y) \sim \gP}\left[\One\left\{h(x)\neq y\right\}\right]\,.
\]
An \emph{$\epsilon$-approximate risk minimizer}~$\gA$ is an algorithm that takes a distribution~$\gP$ as input and returns a classifier~$h\colon \gX \to \gY$ such that
\[
R_\gP(h) \le \min_{h\in\gH} R_\gP(h) + \epsilon\,,
\]
where $\gH$ is a family of classifiers. In the benchmark setting, risk minimization is a collective effort of numerous participants. The algorithm~$\gA$ represents these collective model-building efforts.

We abstract data collection as an operation that takes a classifier~$h$ and, for an underlying distribution~$\gD$, returns a new distribution~$\bar{\gD}$. In the context of adversarial data collection, we assume that annotators can find the conditional distribution over instances on which~$h$ errs $\bar{\gD}=\gD|_{h(x) \neq y}$. This is an idealized representation of data collection, that ignores numerous real-world issues. Nonetheless, this idealized assumption will make our negative results even stronger.

Model combination happens via a weighted majority vote among multiple classifiers. Distribution combination takes a mixture of multiple given distributions according to some proportions.

The final piece in our problem formulation is the ultimate success criterion for a benchmark design. One natural goal of a dynamic benchmark is to output a classifier that minimizes risk on a fixed underlying distribution~$\gD$. We envision that this distribution represents all instances of interest. There is a subtle aspect to this choice of a success criterion. If we already assume we have an $\epsilon$-approximate risk minimizer on~$\gD$, why are we not done from the get-go? The reason is that the $\epsilon$-error term might cover instances crucially important for the success of a classifier in real tasks. After all, a $95\%$ accurate classifier can still perform poorly in practice if real-world instances concentrate on a set of small measure in~$\gD$. The goal is therefore to find classifiers achieving risk significantly below the error level guaranteed by assumption. By asking for successively higher accuracy, we can ensure that the benchmark continues to elicit better-performing models as time goes on.

\paragraph{Notation.}
We define \emph{error set} of a classifier~$h$ as $E_h=\{(x, y) \in \gX \times \gY \mid h(x) \neq y\}$. Note that $R_{\gP}(h)=\Pr_\gP(E_h)$. We drop $\gP$ from $\Pr_\gP(\cdot)$ when this is clear from the context. We say a classification problem is \emph{realizable} on distribution $\gP$ if there is a classifier $f\in \gH$ such that $R_\gP(f)=0$. Here, $\gH$ is the hypothesis class and~$f$ is called the \emph{true} classifier. For realizable problems, no uncertainty will be left over $\gY$ when $x \in \gX$ is drawn, so we use $\gP$ referring to the distribution over $\gX$ and define error sets as a subset of~$\gX$. Let $p_\gP$ be the probability density function associated with $\gP$. We say $\gP$ is \emph{conditioned} on $E \subseteq \gX$ and denote it by $\gP|_E$ if for any $x \in \gX$ we have
$p_{\gP|_E}(x) = p_{\gP}(x | E).$ For notational convenience, we sometimes use $\gP(x)$ in place of $p_\gP(x)$.  The \emph{support}~$\supp(\gP)$ of a distribution $\gP$ is the largest subset of $\gX$ such that $\gP(x) > 0$ for all $x \in \supp(\gP)$. Given probability distributions $\gP_1, \gP_2, \dots, \gP_T$, we denote the mixture distribution with weights $w_t \ge 0$ such that $\sum_t w_t = 1$ by $\mix(\gP_1, \gP_2, \cdots, \gP_T),$ where $p_{\mix}(x) = \sum_{t=1}^T w_t \, \gP_t(x)$. For a set of classifiers $h_1, h_2, \dots, h_T$, we denote the weighted majority vote classifier by $\maj(h_1, h_2, \cdots, h_T)$.
\section{Path dynamic benchmarks}
\label{sec:path_dynamic_benchmark}

\begin{figure}[t]
    \centering
    \includegraphics[width=0.65\linewidth]{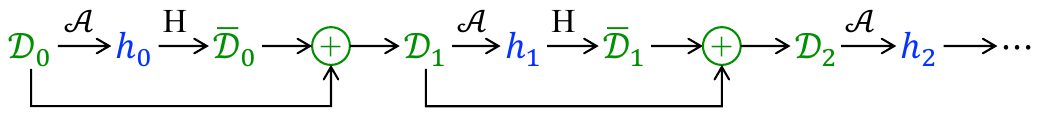}
    \caption{Example of a path dynamic benchmark. Symbol~$\gA$ represents risk minimization, $H$ represents human data collection, and $\oplus$ shows distribution mixing.}
    \label{fig:path_graph}
\end{figure}

The simplest case of a dynamic benchmark corresponds to the design of a directed path interleaving model building and data collection as illustrated in Figure~\ref{fig:path_graph}. This is the design most similar to current proposals of dynamic benchmarks and adversarial data collection~\citep{anli,dynabench}. Starting from an initial distribution, at each round, a new classifier is obtained from the latest distribution. The annotators are then asked to find the vulnerabilities of this model. This new insight will be leveraged towards updating the latest distribution. We call this procedure \emph{path dynamic benchmarking}.

\begin{framed}
\emph{Path dynamic benchmarking:} For an underlying distribution~$\gD$ with true classifier~$f$, given initial distribution~$\gD_0$ and an approximate risk minimizer~$\gA$, at each round~$t$:
\begin{enumerate}[leftmargin=*]
    \item $h_t = \gA(\gD_t)$
    \item $\bar{\gD}_t = \gD|_{h_t(x) \neq f(x)}$
    \item $\gD_{t+1} = \mix(\gD_0, \bar{\gD}_0, \bar{\gD}_1, \bar{\gD}_2, \cdots, \bar{\gD}_t)$
\end{enumerate}
\end{framed}

We first formalize the rationale behind path dynamic benchmarks. Ideally, given a perfect, i.e. $0$-approximate, risk minimizer, every time the current classifier misclassifies some part of the underlying distribution, annotators reveal that part and the updated classifier will avoid repeated mistakes. Since errors will not be repeated across the sequence, there can be a limited number of very bad classifiers. The following simple lemma formalizes this intuition for a target error level $\alpha>0$.

\begin{theoremEnd}{lemma}
\label{lemma:path_realizable_rm_num_err}
For any hypothesis class~$\gH$, true classifier~$f \in \gH$, perfect risk minimizer~$\gA$, underlying distribution~$\gD$, and initial distribution~$\gD_0$ such that $\supp(\gD_0) \subseteq \supp(\gD)$, let $(h_t)_{t=0}^{T-1}$ be any sequence of classifiers obtained in a path dynamic benchmark with equally weighted $\mix(\cdot)$. Then, for any $\alpha>0$, there are at most $\frac{1}{\alpha}$ classifiers of risk more than~$\alpha$. In other words, $|\{t < T \mid R_\gD(h_t) > \alpha\}| \le \frac{1}{\alpha}$.
\end{theoremEnd}
\begin{proofEnd}
As $\supp(\gD_0) \subseteq \supp(\gD)$, running the path dynamic benchmarking will preserve $\supp(\gD_t) \subseteq \supp(\gD)$ for any~$t$. On the other hand, we know $R_\gD(f)=0$. So, we have $R_{\gD_t}(f)=0$. As $\gA$ is a perfect risk minimizer and $f\in \gH$, we should have $R_{\gD_t}(h_t) = 0$. Therefore, errors of $h_t$ should happen outside of $\supp(\gD_t)$: 
\begin{equation}
\label{eq:disjoint_supp}
    E_t \cap \supp(\gD_t) = 0
    ,
\end{equation}
where $E_t$ is the error set of $h_t$. Now assume $t$ is one of the indices where $h_t$ is $\alpha$-bad, i.e., $\Pr_\gD(E_t) > \alpha$. Then we have
\begin{equation}
    \Pr_\gD(x \in \supp(\gD_{t+1})) = \Pr_\gD(x \in \supp(\gD_t)) + \Pr_\gD(x \in E_t) \ge \Pr_\gD(x \in \supp(\gD_t)) + \alpha
    .
\end{equation}
This cannot happen more than $\frac{1}{\alpha}$ times.
\end{proofEnd}

The lemma does not guarantee the latest classifier's risk, but it is straightforward to see a random selection of the classifiers after many rounds are accurate with high probability (see Corollary~\ref{cor:path_realizable_rm_cor}). A more effective way to construct an accurate classifier from the sequence of classifiers is to take their majority vote. In this case, three rounds of model building suffice to find a perfect classifier.
\begin{proposition}
\label{prop:path_realizable_rm_maj_err}
Under the conditions of Lemma~\ref{lemma:path_realizable_rm_num_err}, let $(h_t)_{t=0}^{T-1}$ be any sequence of classifiers obtained in a path dynamic benchmark with uniform mixture weights. If $T \ge 3$, $R_\gD\big(\maj(h_0, h_1, \cdots, h_{T-1})\big)=0$. 
\end{proposition}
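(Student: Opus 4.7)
The plan is to reduce the proposition to showing that the error sets $E_0, E_1, \ldots, E_{T-1}$ of the sequence of classifiers are pairwise disjoint subsets of $\supp(\gD)$. Once this disjointness is in hand, each point in $\supp(\gD)$ is misclassified by at most one of the $h_t$, so when $T \ge 3$ at least $T-1 > T/2$ of the classifiers agree with $f$ at every point, and the majority vote recovers $f$.

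First I would reuse the invariant already established inside the proof of Lemma~\ref{lemma:path_realizable_rm_num_err}: a routine induction based on $\supp(\gD_0) \subseteq \supp(\gD)$ shows $\supp(\gD_t) \subseteq \supp(\gD)$ for every $t$, whence $R_{\gD_t}(f) = 0$. Since $\gA$ is a perfect risk minimizer and $f \in \gH$, its output satisfies $R_{\gD_t}(h_t) = 0$, which at the population level forces $h_t \equiv f$ on $\supp(\gD_t)$, i.e., $E_t \cap \supp(\gD_t) = \emptyset$.

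The second step is to track how supports accumulate across rounds. Because $\gD_{t+1} = \mix(\gD_0, \bar{\gD}_0, \ldots, \bar{\gD}_t)$ with uniform, hence strictly positive, weights, and $\supp(\bar{\gD}_s) = E_s$ by definition of $\bar{\gD}_s = \gD|_{h_s(x) \neq f(x)}$, we obtain $E_s \subseteq \supp(\gD_{t+1})$ for every $s \le t$. Combining with the first step, for any indices $i < j$ we have $E_i \subseteq \supp(\gD_j)$ while $E_j \cap \supp(\gD_j) = \emptyset$, so $E_i \cap E_j = \emptyset$. The error sets are thus pairwise disjoint across rounds.

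The conclusion is then immediate: for any $x \in \supp(\gD)$, pairwise disjointness implies $x \in E_t$ for at most one $t \in \{0, 1, \ldots, T-1\}$, so at least $T-1 \ge 2$ of the $h_t$ agree with $f$ at $x$. Since $T \ge 3$ gives $T-1 > T/2$, the (uniform) majority vote returns $f(x)$, and integrating over $\gD$ yields $R_\gD\bigl(\maj(h_0, \ldots, h_{T-1})\bigr) = 0$. I do not anticipate a genuine technical obstacle; the whole argument is bookkeeping on supports, and the ``uniform mixture weights'' hypothesis is precisely what guarantees that every $\bar{\gD}_s$ contributes positive mass so that its support $E_s$ is actually inherited by $\supp(\gD_{t+1})$. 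The conceptual crux is recognizing that ``$T \ge 3$'' is the smallest number of rounds for which disjoint single-round failures can be outvoted.
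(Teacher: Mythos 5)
Your proof is correct and follows essentially the same route as the paper's: establish $E_t \cap \supp(\gD_t) = \emptyset$ from the perfect-risk-minimizer argument in Lemma~\ref{lemma:path_realizable_rm_num_err}, deduce pairwise disjointness of the error sets because each $\bar{\gD}_{t'}$ contributes its support to later mixtures, and conclude that with $T \ge 3$ no point (up to $\gD$-null sets) can be misclassified by a majority. You merely spell out the support-accumulation step that the paper leaves implicit.
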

\begin{proof}
From the proof of Lemma~\ref{lemma:path_realizable_rm_num_err} we know $E_t \cap \supp(\gD_t)=0$. So, $E_t \cap E_{t'}=0$ for every $t' < t$. The majority vote of $h_t$s will misclassify $x$ if half or more of $h_t$s misclassify $x$. But no two distinct $h_t$s make a common mistake. So, for three or more classifiers, the majority vote classifier is always correct.
\end{proof}

So far, path dynamic benchmarking seems to be a promising choice when a perfect risk minimizer is available and the problem is realizable. The situation changes significantly when we go to approximate risk minimizers.

We first study a three-round path dynamic benchmark. We then show how results would generalize for an arbitrary number of rounds. Our results apply to the case where the initial and underlying distributions~$\gD_0$ do not need to be identical to the target distribution~$\gD$. To measure the distance between distributions with respect to a hypothesis class, we use the following notion.

\begin{definition}[See~\citet{dHDeltaH}]
For a hypothesis class~$\gH$ and distributions~$\gP_1$ and $\gP_2$, the $\gH\Delta\gH$-distance between $\gP_1$ and $\gP_2$ is defined as
\begin{equation}
    d_{\gH\Delta\gH}(\gP_1, \gP_2) = \sup_{h, h' \in \gH} \Big|\E_{x\sim\gP_1}[\One\{h(x) \neq h'(x)\}] - \E_{x\sim\gP_2}[\One\{h(x) \neq h'(x)\}] \Big|
    .
\end{equation}
\end{definition}

The next theorem discusses how path dynamic benchmarking with three rounds performs in the case of an $\epsilon$-approximate risk minimizer.

\begin{theoremEnd}{thm}
\label{thm:path_w3_realizable_approx_rm_upperbound}
For any hypothesis class~$\gH$, true classifier~$f\in\gH$, underlying distribution~$\gD$, initial distribution~$\gD_0$ with $\supp(\gD_0) \subseteq \supp(\gD),$ and any $\epsilon$-approximate risk minimizer~$\gA,$ let $h_0$, $h_1$, and $h_2$ be the three classifiers obtained after three model building rounds in a path dynamic benchmark with uniform mixture weights. Then, the risk of the majority vote classifier is bounded by
\begin{equation}
    R_\gD\big(\maj(h_0, h_1, h_2)\big) \le O\left(\epsilon^2+ \epsilon d_{\gH\Delta\gH}(\gD_0, \gD)\right)
    \,.
\end{equation}
Note that for sufficiently similar $\gD_0$ and $\gD$, i.e., $d_{\gH\Delta\gH}(\gD_0, \gD)=O(\epsilon)$, risk is bounded by $O(\epsilon^2)$.
\end{theoremEnd}
\begin{proofEnd}
Starting from $\gD_0$ where $\supp(\gD_0) \subseteq \supp(\gD)$, path dynamic benchmarks will preserve $\supp(\gD_t) \subseteq \supp(\gD)$ for any $t$. Since the problem is realizable and $\gA$ is $\epsilon$-approximate risk minimizer, for any consistent~$h_t$ we should have $R_{\gD_t}(h_t)\le\epsilon$. The distribution~$\gD_t$ itself is a mixture of the initial and error distributions: $\gD_t=\mix(\gD_0, \bar{\gD}_0, \cdots, \bar{\gD}_{t-1})$. So, we can expand $R_{\gD_t}(\cdot)$ as a linear combination of the risks under mixture components. Note that the risk of $h_t$ under an error distribution $\bar{\gD}_{t'}$, where $t' < t$, can equivalently be written as
$R_{\bar{\gD}_{t'}}(h_t)=\Pr_\gD(E_t|E_{t'})$. For $t=0, 1, 2$ we have:
\begin{align*}
    &R_{\gD_0}(h_0) \le \epsilon \\
    &\frac{1}{2} R_{\gD_0}(h_1) + \frac{1}{2} \Pr_\gD(E_{h_1}|E_{h_0}) \le \epsilon \\
    &\frac{1}{3} R_{\gD_0}(h_2) + \frac{1}{3} \Pr_\gD(E_{h_2}|E_{h_0}) + \frac{1}{3} \Pr_\gD(E_{h_2}|E_{h_1}) \le \epsilon
    .
\end{align*}
The above constraints impose $R_{\gD_0}(h_t)=\Pr_{\gD_0}(E_t) \le (t+1) \epsilon$ and $\Pr_\gD(E_t|E_{t'}) \le (t+1) \epsilon$. However, to limit the joint error probability of two classifiers we need to bound $\Pr_\gD(E_t)$: 
\begin{equation}
    \Pr_\gD(E_t) = R_\gD(h) \le R_{\gD_0}(h) + d_{\gH\Delta\gH}(\gD_0,\gD)
    .
\end{equation}
Finally, by a union bound, the error probability of $\maj(h_0,h_1,h_2)$ is less than the sum of all pairwise joint error probabilities. Putting these all together: 
\begin{align}
    R_\gD\big(\maj(h_0,h_1,h_2)\big) &\le \Pr_\gD(E_{h_0} \cap E_{h_1}) + \Pr_\gD(E_{h_0} \cap E_{h_2}) + \Pr_\gD(E_{h_1} \cap E_{h_2}) \nonumber \\
    &\le 11 \epsilon^2 + 8 \epsilon d_{\gH\Delta\gH}(\gD_0,\gD)
    .
\end{align}

\end{proofEnd}

The obtained $O(\epsilon^2)$~error with only three rounds of model building is a significant improvement to the $O(\epsilon)$~error that could be achieved with static benchmarks and an $\epsilon$-approximate risk minimizer. We then consider what happens if we continue dynamic benchmarking for many rounds.

\begin{theoremEnd}{thm}
\label{thm:path_realizable_approx_rm_lowerbound}
For any $\epsilon$-approximate risk minimizer~$\gA$ with $\frac{1}{\epsilon} \in \mathbb{N}$, hypothesis class~$\gH$ with ${\rm VCdim}(\gH) \ge \frac{8}{\epsilon^2}$, and any path dynamic benchmark with $L\ge3$ rounds of model building and arbitrary mixture weights, there exists an underlying distribution~$\gD$ such that for any true classifier~$f \in \gH$ and initial distribution~$\gD_0$ with $\supp(\gD_0) \subseteq \supp(\gD)$, there exists a sequence~$(h_t)_{t=0}^{L-1}$ of classifiers consistent with path dynamic benchmark where the risk of their weighted majority vote is lowerbounded by
\begin{equation}
\label{eq:path_lowerbound}
    R_\gD \big( \maj (h_0, h_1, \cdots, h_{L-1}) \big) \ge \frac{\epsilon^2}{8}
\end{equation}
for any weighting of $\maj(\cdot)$. Further, Theorem~\ref{thm:path_realizable_approx_rm_lowerbound_specific_h} shows for any path dynamic benchmark, there exists $\gH$ with constant VC dimension such that a similar lower-bound holds.
\end{theoremEnd}

\begin{proofEnd}
For a given domain~$\gX$, we define a new finite domain~$\gX_d \in \gX^d$ such that $\gX_d$ can be shattered by $\gH$. For an arbitrary but fixed order on $\gX_d$, we can define equivalent vector representations of functions and distributions in the new domain. Particularly, let $h: \gX \rightarrow \{-1, 1\}$ be a binary classifier defined on $\gX$. The vector representation of $h$ in $\gX_d$ is $\vh = (h(x))_{x \in \gX_d} \in \{-1, 1\}^d$. Similarly, let $\gP$ be a probability distribution over $\gX$. The vector representation of $\gP$ in $\gX_d$ is $\mP = (\gP(x))_{x \in \gX_d} \in \Delta(\gX_d)$. Throughout the proof, we use $\gK$ to show a subset of $\gX_d$ with $k$~elements. We also use $\gP(\gK)$ as a shorthand for $\sum_{x \in \gK} \gP(x)$. Throughout the proof we use interval notation for discrete intervals. For example, $[a,b)$ denotes $\{a, a+1, \dots, b-1\}$. Finally, $\vh_1 \circ \vh_2$ denotes the entrywise product of $\vh_1$ and $\vh_2$. 

The four allowed operations on classifiers and distributions have new interpretations in the new domain:
\begin{enumerate}
    \item $\gD_t \xrightarrow{\gA} h_t$: The fact that $\gA$ is an $\epsilon$-approximate risk minimizer imposes the following constraint:
    \begin{equation}
    \label{eq:inner_prod_constraint}
        \langle \vh_t \circ \vf, \mD_t \rangle = 
        \sum_{h_t(x) f(x)=1} \gD_t(x) - 
        \sum_{h_t(x) f(x)=-1} \gD_t(x)
        = 1 - 2 R_{\gD_t}(h_t) \ge 1 - 2\epsilon
        .
    \end{equation}

    \item $h_t \xrightarrow{H} \bar{\gD}_t$: Let $\gK_t$ be the set of indices that $h_t$ and $f$ disagree. Then, it is straightforward to see
    \begin{equation}
        \bar{\mD}_t = \frac{1}{\gD(\gK_t)}(\mD - \vh_t \circ \vf \circ \mD)
        .
    \end{equation}

    \item $(h_0, h_1, \cdots, h_t) \xrightarrow{\maj} \hat{h}$: This is equivalent to per dimension voting: $\hat{h}(x) = \maj(h_0(x), h_1(x), \cdots, h_t(x))$ for $x \in \gX_d$.
    
    \item $(\gD_0, \bar{\gD}_0, \bar{\gD}_1, \cdots, \bar{\gD}_{t-1}) \xrightarrow{\mix} \gD_t$: This is simply equivalent to the weighted sum of
    \begin{equation}
    \label{eq:D_t_vec_def}
        \mD_t = w_{t,0} \mD_0 + \sum_{t' < t} \bar{w}_{t, t'} \bar{\mD}_{t'}
        ,
    \end{equation}
    where $w$s are the weights of the mixture components, summing up to $1$.
\end{enumerate}
Note that in all of the operations $\vh_t$ and $\vf$ have appeared only as $\vh_t \circ \vf$. Since $\gX_d$ can be shattered by $\gH$, without loss of generality, we assume $\vf$ is all one and use $\vh_t$ instead of $\vh_t \circ \vf$ in the following.

Next, we start from the initial distribution $\gD_0$ as the input to the path dynamic benchmarks and find the constraints imposed over variables throughout the process:
\begin{itemize}
    \item $\gD_0 \xrightarrow{\gA} h_0$:
    In this case $\langle \vh_0, \mD_0 \rangle  = 1 - 2 \gD_0(\gK_0)$ and  Equation~\ref{eq:inner_prod_constraint} requires:
    \begin{equation}
    \label{eq:constraint_0}
        \gD_0(\gK_0) \le \epsilon
        .
    \end{equation}

    \item $\gD_t \xrightarrow{\gA} h_t$: For $\mD_t$ defined in Equation~\ref{eq:D_t_vec_def}, we have:
    \begin{equation}
    \label{eq:h_t_D_t_innerproduct}
        \langle \vh_t, \mD_t \rangle = 
        w_{t,0} \langle \vh_t, \mD_0 \rangle +
        \sum_{t' < t} \bar{w}_{t,t'} \langle \vh_t, \bar{\mD}_{t'} \rangle
        ,
    \end{equation}
    where $\langle \vh_t, \mD_0 \rangle = 1 - 2\gD_0(\gK_0)$ and
    \begin{align}
        \langle \vh_t, \bar{\mD}_{t'} \rangle &=
        \frac{1}{2\gD(\gK_{t'})} \Big( \langle \vh_t, \mD \rangle - 
        \langle \vh_t \circ \vh_{t'}, \mD \rangle \Big) \nonumber \\
        &= \frac{1}{2\gD(\gK_{t'})} \Big( 1 - 2\gD(\gK_t) 
        - \big(1 - 2\gD(\gK_t \cup \gK_{t'} - \gK_t \cap \gK_{t'}) \big) \Big) \nonumber \\
        &= 1 - \frac{\gD(\gK_t \cap \gK_{t'})}{\gD(\gK_{t'})}
        .
    \end{align}
    Here we used 
    $\gD(\gK_t \cup \gK_{t'} - \gK_t \cap \gK_{t'}) = \gD(\gK_t) + \gD(\gK_{t'}) - 2\gD(\gK_t \cap \gK_{t'})$.
    Plugging $\langle \vh_t, \bar{\mD}_{t'} \rangle$ into Equation~\ref{eq:h_t_D_t_innerproduct} and requiring $\langle \vh_t, \mD_t \rangle \ge 1 - 2\epsilon$ from Equation~\ref{eq:inner_prod_constraint}, impose
    \begin{equation}
    \label{eq:constraint_t}
        w_{t,0} \gD_0(\gK_t) + 
        \sum_{t' < t} \bar{w}_{t,t'} \frac{\gD(\gK_t \cap \gK_{t'})}{\gD(\gK_{t'})} 
        \le \epsilon
        .
    \end{equation}
\end{itemize}

In the following, we propose $\gK_t$s such that constraints of Equations~\ref{eq:constraint_0}~and~\ref{eq:constraint_t} are satisfied. These are the necessary and sufficient conditions to have $h_t$s consistent with path dynamic benchmarks.

\underline{For $0 \le t < T=\frac{2}{\epsilon}$:}
Consider $\gK_t$s such that $\gK_t \cap \gK_{t'} = \gK$ for any $t' < t$. Sufficient conditions to satisfy Equations~\ref{eq:constraint_0}~and~\ref{eq:constraint_t} are
\begin{align}
    \label{eq:disjoint_constraint}
    \gK_t \cap \gK_{t'} = \gK \\
    \label{eq:ratio_constraint}
    \frac{\gD(\gK)}{\gD(\gK_t)} = \frac{k}{k_t} \le \frac{\epsilon}{2} \\
    \label{eq:D_0_constraint}
    \gD_0(\gK_t) \le \frac{\epsilon}{2}
\end{align}
at every time step. Let's reorder axes in an ascending order of $\mD_0$ and name them from $1$ to $d$. Also assume this results in an ascending order of $\mD$ as well. For example, $\gD=\unif(\gX_d)$ always satisfies this. We set $\gK=[1,k]$ and $\gK_0=[1,k']$ where $k'=\frac{2k}{\epsilon}$. Then for $1 \le t < T= \frac{2}{\epsilon}$, we set 
$\gK_t=\gK \cup (k' + (t-1) (k'-k), k' + t (k'-k)]$:
\begin{align*}
    \gK: &\xleftrightarrow{\mathmakebox[4pt]{k}} \\
    \gK_0: &\xleftrightarrow{\mathmakebox[54pt]{k'=\frac{2k}{\epsilon}}} \\
    \gK_1: &\xleftrightarrow{\mathmakebox[4pt]{k}}  
    \mathmakebox[44pt]{}
    \xleftrightarrow{\mathmakebox[44pt]{k'-k}} \\
    \gK_2: &\xleftrightarrow{\mathmakebox[4pt]{k}}  
    \mathmakebox[94pt]{}
    \xleftrightarrow{\mathmakebox[44pt]{k'-k}} \\
    \dots&
\end{align*}
It is straightforward to see that this assignment satisfies Equations~\ref{eq:disjoint_constraint}~and~\ref{eq:ratio_constraint}. 
Since axes are ordered ascendingly according to $\mD_0$, a sufficient condition to satisfy Equation~\ref{eq:D_0_constraint} for all rounds is 
\begin{equation}
\label{eq:last_D_0_condition}
    \gD_0(\gK_{T-1}) \le \gD_0 \big( k' + (T-1) (k'-k) \big) \, k'  \le \gD_0(T k') k'
    = \gD_0\Big( \frac{2k'}{\epsilon} \Big) \, k' \le \frac{\epsilon}{2}
    .
\end{equation}
For $y=\frac{2k'}{\epsilon}$, the above condition is $\gD_0(y) \le \frac{1}{y}$. Again, since axes are ordered in an ascending order of $\mD_0$, $\gD_0(y) \le \frac{1}{d-y+1}$ (otherwise, the sum of $\mD_0$ elements will go above~$1$). Then simple calculations show $d\ge2y=\frac{8k}{\epsilon^2}$ is sufficient to hold Equation~\ref{eq:last_D_0_condition}. Since $k$ is an integer, this is possible for $d\ge\frac{8}{\epsilon^2}$ which requires ${\rm VCdim(\gH)}\ge\frac{8}{\epsilon^2}$. For $d=\frac{8}{\epsilon^2}$, this gives $\frac{k}{d} = \frac{\epsilon^2}{8}$.

\underline{For $t \ge T$:}
We set $h_t=h_{\phi(t)}$ where $\phi: [T, \infty) \rightarrow [0, T-1]$ is an assignment function working as follows. We define updated weights for $0 \le \tau < T$:
\begin{align}
    v_{t,0} &= w_{t,0} \\
    \bar{v}_{t,\tau} &= \bar{w}_{t,\tau} + \sum_{T \le t' < t} \bar{w}_{t,t'} \One\{\phi(t')=\tau\}
    .
\end{align}
Then $\phi(\cdot)$ assigns round~$t$ according to
\begin{equation}
    \phi(t) = \argmin_{0 \le \tau < T} \bar{v}_{t,\tau}
    .
\end{equation}
The sum of the updated weights satisfies
\begin{equation}
    \sum_{0 \le \tau < T} \bar{v}_{t,\tau} = 
    \sum_{0 \le \tau < T} \bar{w}_{t,\tau} + 
    \sum_{T \le t' < t} \bar{w}_{t,t'} \sum_{0 \le \tau < T} \One\{\phi(t')=\tau\} = \sum_{0 \le t' < t} \bar{w}_{t,t'} \le 1
    ,
\end{equation}
where we used the identity $\sum_{0 \le \tau < T} \One\{\phi(t')=\tau\}=1$. This guarantees 
\begin{equation}
\label{eq:v_bar_bound}
    \bar{v}_{t,\phi(t)} \le \frac{1}{T} = \frac{\epsilon}{2}
    .
\end{equation}
Plugging $h_t=h_{\phi(t)}$ into Equation~\ref{eq:constraint_t} gives:
\begin{align}
\label{eq:contraint_t_ge_T}
    &w_{t,0} \gD_0(\gK_{\phi(t)}) 
    + \sum_{0 \le \tau < T} \bar{w}_{t,\tau} \frac{\gD(\gK)}{\gD(\gK_{\tau})} 
    + \sum_{T \le t' < t} \bar{w}_{t,t'} \frac{\gD(\gK)}{\gD(\gK_{\phi(t')})} \nonumber \\
    &=w_{t,0} \gD_0(\gK_{\phi(t)}) 
    + \Big( \bar{w}_{t,\phi(t)} 
    + \sum_{\substack{T \le t' < t \\ \phi(t')=\phi(t)}} \bar{w}_{t,t'} \Big)
    + \sum_{\substack{0 \le \tau < T \\ \tau \neq \phi(t)}} \bar{w}_{t,\tau} \frac{\gD(\gK)}{\gD(\gK_{\tau})} 
    + \sum_{\substack{T \le t' < t \\ \phi(t')\neq\phi(t)}} \bar{w}_{t,t'} \frac{\gD(\gK)}{\gD(\gK_{\phi(t')})}
    \nonumber \\
    &\le \Big(w_{t,0} +  \sum_{\substack{0 \le \tau < T \\ \tau \neq \phi(t)}} \bar{w}_{t,\tau}  
    + \sum_{\substack{T \le t' < t \\ \phi(t')\neq\phi(t)}} \bar{w}_{t,t'} \Big) \frac{\epsilon}{2} + \bar{v}_{t,\phi(t)} 
    \le \epsilon
\end{align}
Here we used the following observations. Since $h_\tau$ satisfies $\gD_0(\gK_\tau)\le\frac{\epsilon}{2}$ for all $\tau < t$, $h_t$ also satisfies   $\gD_0(\gK_t)=\gD_0(\gK_{\phi(t)})\le\frac{\epsilon}{2}$. Also all the ratios $\frac{\gD(\gK)}{\gD(\gK_\tau)}$ and $\frac{\gD(\gK)}{\gD(\gK_{\phi(t')})}$ are bounded by $\frac{\epsilon}{2}$ according to Equation~\ref{eq:ratio_constraint}. We finally applied Equation~\ref{eq:v_bar_bound} to bound $\bar{v}_{t,\phi(t)}$. This completes our argument that $h_t$ is consistent for $t \ge T$.

So far, we have introduced a sequence of classifiers~$(h_t)_t$ which are consistent with path dynamic benchmarks and all misclassify $\gK$. We also showed for the selection of $d=\frac{8}{\epsilon^2}$, we have $\frac{k}{d}=\frac{\epsilon^2}{8}$. The only required assumption is that the element of $\mD_0$ and $\mD$ are both ascending. For the special case of $\gD=\unif(\gX_d)$, $\gD(\gK)=\frac{k}{d}$. Since elements of $\gK$ are misclassified by all the classifiers, no matter how the majority vote of them is calculated, $\gK$ will be in the error set:
\begin{equation}
    R_\gD\big(\maj(h_0, h_1, \cdots)\big) \ge \gD(\gK) = \frac{k}{d} = \frac{\epsilon^2}{8}
    .
\end{equation}
This completes the theorem.

\end{proofEnd}

Theorem~\ref{thm:path_realizable_approx_rm_lowerbound} shows that $\Omega(\epsilon^2)$~error serves as a lower bound in the approximate risk minimizer setting for any path design (any mixture weighting and weighted majority). Then Theorem~\ref{thm:path_w3_realizable_approx_rm_upperbound} shows three rounds of model building with uniform weights can achieve the lower bound, so it is optimal, and continuing dynamic benchmarking for more rounds might not be helpful.

\subsection{Challenges in non-realizable settings}

For many reasons, our problem may not be realizable. For example, a class of functions that is not complex enough to explain the true model constitutes an unrealizable setting. Even for a complex enough class, annotators might fail to label instances correctly. In a simplified model for unrealizable problems, we assign random labels to a small part of the distribution and let the rest of the distribution be realizable. Formally, let $\gX^\delta$ be the randomly labeled subset of $\gX$ and $\gX^{\bar{\delta}} = \gX \setminus \gX^\delta$ be the rest of the domain labeled with $f$. Since no classifier can do well on $\gX^\delta$, dynamic benchmarks are prone to overrepresent $\gX^\delta$. This intuition is formalized in Theorem~\ref{thm:path_unrealizable_approx_rm} where we show a significant portion of $\gD_t$ will be concentrated on $\gX^\delta$.

\begin{theoremEnd}{thm}
\label{thm:path_unrealizable_approx_rm}
For any hypothesis class~$\gH$, true classifier~$f$, $\epsilon$-approximate risk minimizer~$\gA$, and any underlying distribution~$\gD$ such that $\delta$-proportion of $\gD$ is labeled randomly and the rest is labeled by~$f$, if $\delta > \epsilon$, as long as $t=O(\frac{\delta}{\epsilon})$, at least $\Omega(1)$-proportion of $\gD_t$ obtained through a path dynamic benchmark will be concentrated around the unrealizable instances, i.e., $\Pr_{\gD_t}(x \in \gX^\delta) = \Omega(1)$.  
\end{theoremEnd}
\begin{proofEnd}
Let $\gD_t^\delta=\gD_t|_{x\in\gX^\delta}$ and $\gD_t^{\bar{\delta}}=\gD_t|_{x\notin\gX^\delta}$. Also let $\delta_t=\Pr_{\gD_t}(x\in\gX^\delta)$. So, we can write $\gD_t$ as a mixture of its restricted distributions: $\gD_t=\delta_t \gD_t^\delta + (1-\delta_t) \gD_t^{\bar{\delta}}$. The classifier~$h_t$ trained on $\gD_t$ should satisfy
\begin{align}
    R_{\gD_t}(h_t) &= 
    \delta_t R_{\gD_t^\delta}(h_t) 
    + (1-\delta_t) R_{\gD_t^{\bar{\delta}}}(h_t) 
    = \frac{\delta_t}{2} + (1-\delta_t) R_{\gD_t^{\bar{\delta}}}(h_t) \nonumber \\
    &\le \min_{h \in \gH} R_{\gD_t}(h) + \epsilon 
    = \frac{\delta_t}{2} + \epsilon
    ,
\end{align}
which requires 
$(1-\delta_t) R_{\gD_t^{\bar{\delta}}}(h_t) \le \epsilon$. 
Here we used the fact that labels of $\gX^\delta$ are equiprobable to be $1$ or $-1$. So, risk of any classifier under $\gD_t^\delta$ is $\frac{1}{2}$. The inequality comes from $\gA$ being $\epsilon$-approximate risk minimizer and the last equality is due the realizability of $\gD_t^{\bar{\delta}}$. The error distribution of $h_t$ is
\begin{equation}
\label{eq:err_dist_unrealizable}
    \bar{\gD}_t = \frac{
    \frac{\delta}{2} \bar{\gD}_t^\delta + (1-\delta) R_{\gD^{\bar{\delta}}}(h_t)\,\bar{\gD}_t^{\bar{\delta}}
    }{
    \frac{\delta}{2} + (1-\delta) R_{\gD^{\bar{\delta}}}(h_t)
    }
    ,
\end{equation}
where $\gD^{\bar{\delta}}=\gD|_{x\notin\gX^\delta}$,
$\bar{\gD}_t^\delta=\gD|_{x\in\gX^\delta, h_t(x)\neq y}$,
$\bar{\gD}_t^{\bar{\delta}}=\gD|_{x\notin\gX^\delta, h_t(x)\neq f(x)}$, and $\delta=\Pr_\gD(x\in\gX^\delta)$. Note that again we used the fact that labels of $\gX^\delta$ are random. Since $\gD_0=\gD$ has a weight of $\frac{1}{t+1}$ in $\gD_t$:
\begin{equation}
    R_{\gD^{\bar{\delta}}}(h_t) 
    \le (t+1) R_{\gD_t^{\bar{\delta}}}(h_t) 
    \le (t+1) \frac{\epsilon}{1 - \delta_t}
    .
\end{equation}
So, the weight of $\bar{\gD}^\delta_t$ component in $\bar{\gD}_t$ (Equation~\ref{eq:err_dist_unrealizable}) will be greater than or equal to $(1 + 2(t+1)\frac{\epsilon}{\delta} \frac{1-\delta}{1-\delta_t})^{-1}$. Then the total weight given to $\gX^\delta$ by distribution $\gD_{t+1} = \frac{t+1}{t+2} \gD_t + \frac{1}{t+2} \bar{\gD}_t$ will be
\begin{equation}
\label{eq:delta_t}
    \delta_{t+1} \ge \frac{t+1}{t+2} \delta_t + \big(\frac{1}{t+2}\big) \frac{1}{
    1 + 2(t+1)\frac{\epsilon}{\delta} \frac{1-\delta}{1-\delta_t}
    } 
    .
\end{equation}
The first observation from the above equation is 
\begin{equation}
    \delta_1 \ge  \frac{\delta}{2} + \frac{1}{2} \frac{1}{1 + 2 \frac{\epsilon}{\delta}}
    ,
\end{equation}
where we used $\delta_0=\delta$. For sufficiently large $\frac{\delta}{\epsilon}$, $\delta_1$ goes above $\frac{1}{2}$ which means more than half of the benchmark will be focused on the unrealizable instances. Next, we show this is not limited to the first round, and $\delta_t$ maintains a large proportion of $\gD_t$ as we progress. Recursively expanding Equation~\ref{eq:delta_t} and assuming $\delta_t \le 2$:
\begin{align}
    \delta_t &\ge \frac{\delta}{t+1} + \frac{1}{t+1} \sum_{t'=1}^t \frac{1}{1+a\,t'} \nonumber \\
    &\ge \frac{\delta}{t+1} + \frac{\ln(1+a(t+1))}{a(t+1)} - \frac{\ln(1+a)}{a(t+1)} 
    = \frac{\delta}{t+1} + \frac{\ln(1+\frac{a\,t}{1+a})}{a(t+1)} \nonumber \\
    &\ge \frac{\delta}{t+1} + \frac{1}{a(t+1)} \big(\frac{a t /(1+a)}{1 + a t /(1+a)}\big) 
    = \frac{\delta}{t+1} + \big(\frac{t}{t+1}\big) \frac{1}{1+a(t+1)}
    ,
\end{align}
where $a=4 \frac{\epsilon}{\delta}(1-\delta)$. We applied $\ln(1+x) \ge \frac{x}{1+x}$ to obtain the last inequality. This shows the upperbound cannot decrease faster than $\Omega(\frac{1}{1+a\,t})$. For $t \ge 1$, we have
\begin{equation}
    \delta_t \ge \frac{1}{2(1+2 a t)} 
    \ge \frac{1}{2(1+8 \frac{\epsilon}{\delta} t)}
    .
\end{equation}
So, for $t \le \beta \frac{\delta}{\epsilon}$, $\delta_t \ge \frac{1}{2(1+8\beta)}$. This completes the proof.
\end{proofEnd}

As a direct consequence, classifiers trained on a path dynamic benchmark lose their sensitivity to realizable instances and might show an unexpectedly bad performance on the rest of the distribution.
\section{Hierarchical dynamic benchmarks}

Thus far, we have observed that given an $\epsilon$-approximate risk minimizer, the smallest achievable risk through path dynamic benchmarks is $\Omega(\epsilon^2)$. This observation evokes the idea of possibly achieving a squared risk by adding a new \emph{layer} to the benchmarking routine, which calls path dynamic benchmarking as a subroutine. Figure~\ref{fig:hier_graph} shows such a structure with three steps. At each step, path dynamic benchmarks are obtained starting from a mixture of the initial distribution and error distributions of all previous steps. In the second layer, models found in different steps are aggregated. We call the dynamic benchmark of Figure~\ref{fig:hier_graph} a depth-$2$ \emph{hierarchical dynamic benchmark}. In an extended form, a depth-$k$ hierarchical dynamic benchmark can be designed by adding a new layer on top of the models obtained from some depth-$(k-1)$ benchmarks.

\begin{figure}[t]
    \centering
    \includegraphics[width=0.56\linewidth]{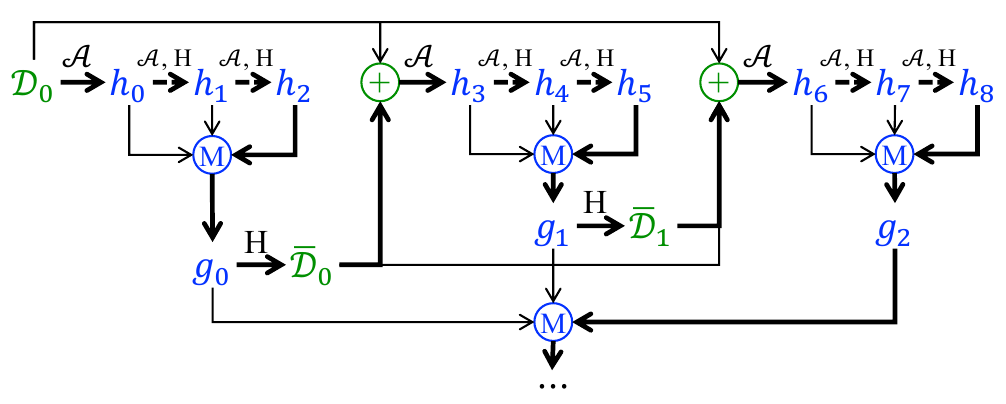}
    \caption{Depth-$2$ width-$3$ hierarchical dynamic benchmark. Symbols as in Figure~\ref{fig:path_graph} with $M$ representing majority vote.}
    \label{fig:hier_graph}
\end{figure}

\begin{framed}
\emph{Hierarchical dynamic benchmarking:} For an underlying distribution~$\gD$ and true classifier~$f$, given an initial distribution $\gD_0$ and an approximate risk minimizer $\gA$, depth-$k$ width-$w$ hierarchical dynamic benchmarks are constructed recursively:
\begin{enumerate}[leftmargin=0.72cm]
    \item[def.] $\gA^{(k)}(\gD_0)$:
    \item $h_0 = \gA^{(k-1)}(\gD_0)$
    \item For $t=1, \cdots, w-1$:
    \begin{enumerate}
        \item $\bar{\gD}_{t-1} = \gD|_{h_{t-1}(x) \neq f(x)}$
        \item $\gD_t = \mix(\gD_0, \bar{\gD}_0, \bar{\gD}_1, \cdots, \bar{\gD}_{t-1})$
        \item $h_t = \gA^{(k-1)}(\gD_t)$
    \end{enumerate}
    \item return $\maj(h_0, h_1, \cdots, h_{w-1})$
\end{enumerate}
where $\gA^{(0)} = \gA$. 
\end{framed}

Note that this setting does not mean that different steps of path dynamic benchmarking can be done in isolation. Running two benchmarking tasks independently does not yield benefits since humans in the loop might return similar vulnerabilities. In the depth-$2$ hierarchy of Figure~\ref{fig:hier_graph}, we have bold-faced arrows corresponding to the sequence of steps that should be taken. So, any reasonable dynamic benchmarking based on our model is a sequential process, and the name hierarchy is meant to carry the intuition on how the aggregation of classifiers is happening.

Also note that $\gA^{(k)}$ calls annotators for $w^k$~rounds. Since no known empirical dynamic benchmarking study has ever had more than $20$~rounds (as studied in~\citet{anli_in_limit}), we limit our analysis to a depth-$2$ width-$3$ structure (Figure~\ref{fig:hier_graph}). Here $w \ge 3$ is the necessary and sufficient number of rounds for path dynamic benchmarks to ensure $O(\epsilon^2)$~risk when $\gA$ is $\epsilon$-approximate risk minimizer and $\gD_0$ is sufficiently similar to $\gD$. 

The next theorem provides an upper bound on the risk of any classifier obtained through hierarchical data dynamic benchmarks of Figure~\ref{fig:hier_graph}.

\begin{theoremEnd}{thm}
\label{thm:hier_realizable_approx_rm_upperbound}
For any hypothesis class~$\gH$, true classifier~$f\in\gH$, underlying distribution~$\gD$, initial distribution~$\gD_0$ with $\supp(\gD_0) \subseteq \supp(\gD)$, and any $\epsilon$-approximate risk minimizer~$\gA$, the risk of any classifier obtained from a hierarchical dynamic benchmark with depth-$2$ and width-$3$~(Figure~\ref{fig:hier_graph}) where $\mix(\cdot)$ and $\maj(\cdot)$ uniformly weight the inputs, is bounded by
\begin{equation}
    R_\gD(\maj(g_0, g_1, g_2)) \le O\big( \epsilon^3 + \epsilon^2 d_{\gH\Delta\gH}(\gD_0, \gD) \big)
    \,.
\end{equation}
Note that for sufficiently similar $\gD_0$ and $\gD$, i.e., $d_{\gH\Delta\gH}(\gD_0, \gD)=O(\epsilon)$, risk is bounded by $O(\epsilon^3)$.
\end{theoremEnd}

\begin{proofEnd}
We use the same notation as Figure~\ref{fig:hier_graph} and follow similar arguments as the proof of Theorem~\ref{thm:path_w3_realizable_approx_rm_upperbound}. Also for notation convenience we use the shorthand $\Delta_0$ to denote $d_{\gH\Delta\gH}(\gD_0, \gD)$. 

Starting from the first path dynamic benchmarking step ($h_0 \rightarrow h_1 \rightarrow h_2$), in Theorem~\ref{thm:path_w3_realizable_approx_rm_upperbound} we observed:
Now we can bound $R_\gD(g_0)$ by
\begin{equation}
\label{eq:upperbound_g_0}
    R_\gD(g_0) \le 11 \epsilon^2 + 8 \epsilon \Delta_0
    .
\end{equation}

For the second path dynamic benchmarking step ($h_3 \rightarrow h_4 \rightarrow h_5$) we have:
\begin{align}
    &\frac{1}{2} R_{\gD_0}(h_3) + \frac{1}{2} \Pr_\gD(E_{h_3}|E_{g_0}) \le \epsilon \\
    &\frac{1}{3} R_{\gD_0}(h_4) + \frac{1}{3} \Pr_\gD(E_{h_4}|E_{g_0}) + \frac{1}{3} \Pr_\gD(E_{h_4}|E_{h_3}) \le \epsilon \\
    &\frac{1}{4} R_{\gD_0}(h_5) + \frac{1}{4} \Pr_\gD(E_{h_5}|E_{g_0}) + \frac{1}{4} \Pr_\gD(E_{h_5}|E_{h_3}) + \frac{1}{4} \Pr_\gD(E_{h_5}|E_{h_4}) \le \epsilon
    .
\end{align}
These equations let us bound $R_\gD(g_1)$ and $\Pr_\gD(E_{g_1}|E_{g_0})$:
\begin{align}
    R_\gD(g_1) &\le \Pr_\gD(E_{h_3} \cap E_{h_4}) + \Pr_\gD(E_{h_3} \cap E_{h_5}) + \Pr_\gD(E_{h_4} \cap E_{h_5}) \nonumber \\
    &\le 3\epsilon(2\epsilon + \Delta_0) + 4\epsilon(2\epsilon + \Delta_0) + 4\epsilon(3\epsilon + \Delta_0) 
    = 26 \epsilon^2 + 11 \epsilon \Delta_0
    ,
\end{align}
and
\begin{equation}
    \Pr_\gD(E_{g_1}|E_{g_0}) \le \Pr_\gD(E_{h_3}|E_{g_0}) + \Pr_\gD(E_{h_4}|E_{g_0}) + \Pr_\gD(E_{h_5}|E_{g_0}) \le 2\epsilon + 3\epsilon + 4\epsilon = 9\epsilon
    .
\end{equation}

Finally, for the third path dynamic benchmarking step ($h_6 \rightarrow h_7 \rightarrow h_8$) we have:
\begin{align}
    &\frac{1}{3} R_{\gD_0}(h_6) + \frac{1}{3} \Pr_\gD(E_{h_6}|E_{g_0}) +\frac{1}{3} \Pr_\gD(E_{h_6}|E_{g_1}) \le \epsilon \\
    &\frac{1}{4} R_{\gD_0}(h_7) + \frac{1}{4} \Pr_\gD(E_{h_7}|E_{g_0}) + \frac{1}{4} \Pr_\gD(E_{h_7}|E_{g_1}) + \frac{1}{4} \Pr_\gD(E_{h_7}|E_{h_6}) \le \epsilon \\
    &\frac{1}{5} R_{\gD_0}(h_8) + \frac{1}{5} \Pr_\gD(E_{h_8}|E_{g_0}) + \frac{1}{5} \Pr_\gD(E_{h_8}|E_{g_1}) + \frac{1}{5} \Pr_\gD(E_{h_8}|E_{h_6}) + \frac{1}{5} \Pr_\gD(E_{h_8}|E_{h_7}) \le \epsilon
    .
\end{align}
This lets us bound $\Pr_\gD(E_{g_2}|E_{g_0})$ and $\Pr_\gD(E_{g_2}|E_{g_1})$:
\begin{equation}
    \Pr_\gD(E_{g_2}|E_{g_0}) \le \Pr_\gD(E_{h_6}|E_{g_0}) + \Pr_\gD(E_{h_7}|E_{g_0}) + \Pr_\gD(E_{h_8}|E_{g_0}) \le 3\epsilon + 4\epsilon + 5\epsilon = 12\epsilon,
\end{equation}
and a similar calculation shows $\Pr_\gD(E_{g_2}|E_{g_1}) \le 12\epsilon$. Putting these all together, we can bound the risk of $\maj(g_0, g_1, g_2)$ by
\begin{align}
    R_\gD(\maj(g_0, g_1, g_2)) &\le \Pr_\gD(E_{g_0} \cap E_{g_1}) + \Pr_\gD(E_{g_0} \cap E_{g_2}) + \Pr_\gD(E_{g_1} \cap E_{g_2}) \nonumber \\
    &\le R_\gD(g_0) \Pr_\gD(E_{g_1}|E_{g_0}) + R_\gD(g_0) \Pr_\gD(E_{g_2}|E_{g_0}) + R_\gD(g_1) \Pr_\gD(E_{g_2}|E_{g_1}) \nonumber \\
    &\le 9\epsilon(11\epsilon^2 + 8\Delta_0\epsilon) + 12\epsilon(11\epsilon^2 + 8\Delta_0\epsilon) + 12\epsilon(26\epsilon^2 + 11\Delta_0\epsilon) \nonumber \\
    &= 543\epsilon^3 + 300\epsilon^2 \Delta_0
    ,
\end{align}
which completes the proof. 
\end{proofEnd}

The hope in adding a new layer to dynamic benchmarking was to obtain a squared risk of the previous layer's risk. So, ideally, a depth-$2$ hierarchical dynamic benchmark could achieve $O(\epsilon^4)$~error. But this is not the case; next, we show that the upper bound of Theorem~\ref{thm:hier_realizable_approx_rm_upperbound} is tight up to a constant, and the conjecture of squared risk per layer does not hold. 

\begin{theoremEnd}{thm}
\label{thm:hier_realizable_approx_rm_lowerbound}
For any $\epsilon$-approximate risk minimizer~$\gA$ with $\frac{1}{\epsilon} \in \mathbb{N}$, hypothesis class~$\gH$ with ${\rm VCdim}(\gH) \ge \frac{2}{\epsilon^3}$, and any hierarchical dynamic benchmark with depth-$2$ and width-$3$~(Figure~\ref{fig:hier_graph})
with arbitrary mixture and majority weights, there exists an underlying distribution~$\gD$ such that for any true classifier~$f \in \gH$ and initial distribution~$\gD_0$ with $\supp(\gD_0) \subseteq \supp(\gD)$, there exists classifiers consistent with hierarchical dynamic benchmark for which
\begin{equation}
\label{eq:hier_lowerbound}
    R_\gD\big(\maj(g_0, g_1, g_2)\big) \ge \frac{\epsilon^3}{2}
    .
\end{equation}
Further, Theorem~\ref{thm:hier_realizable_approx_rm_lowerbound_specific_h} shows for any hierarchical dynamic benchmark, there exists $\gH$ with constant VC~dimension such that a similar lower-bound holds.
\end{theoremEnd}

\begin{proofEnd}
We follow the proof of Theorem~\ref{thm:path_realizable_approx_rm_lowerbound} and define equivalent vector representations of functions and distributions in a new finite domain $\gX_d \in \gX^d$ such that $\gX_d$ can be shattered by $\gH$ (so, $d \le \frac{2}{\epsilon^3}$). We also name variables according to Figure~\ref{fig:hier_graph}. For a true classifier $f$, $\gK_t$ denotes the set of indices where $h_t$ and $f$ disagree. Similarly, $\gK_{g_t}$ is the set of indices where $f$ and $g_t$ disagree.

First of all, we reorder axes in an ascending order of $\mD_0$. Also assume this results in an ascending order of $\mD$ as well. For example, $\gD=\unif(\gX_d)$ always satisfies this. Naming axes from $1$ to $d$ and showing them on the horizontal line, we assign $\gK_t$s and $\gK_{g_t}$s according to:
\begin{align*}
    \gK_0: &\xleftrightarrow{\mathmakebox[84pt]{\frac{1}{\epsilon^2}}} \\
    \gK_1: &\xleftrightarrow{\mathmakebox[24pt]{\frac{1}{\epsilon}}}
    \mathmakebox[54pt]{}
    \xleftrightarrow{\mathmakebox[54pt]{\frac{1}{\epsilon^2}-\frac{1}{\epsilon}}} \\
    \gK_2: &\xleftrightarrow{\mathmakebox[24pt]{\frac{1}{\epsilon}}}
    \mathmakebox[114pt]{}
    \xleftrightarrow{\mathmakebox[54pt]{\frac{1}{\epsilon^2}-\frac{1}{\epsilon}}} \\
    \cline{1-2}
    \gK_{g_0}: &\xleftrightarrow{\mathmakebox[24pt]{\frac{1}{\epsilon}}} \\
    \cline{1-2}
    \gK_3: &\xleftrightarrow{\mathmakebox[4pt]{1}}
    \mathmakebox[14pt]{}
    \xleftrightarrow{\mathmakebox[74pt]{\frac{1}{\epsilon^2}-1}} \\
    \gK_4: &\xleftrightarrow{\mathmakebox[4pt]{1}}
    \mathmakebox[14pt]{}
    \xleftrightarrow{\mathmakebox[16pt]{\frac{1}{\epsilon}-1}}
    \mathmakebox[54pt]{}
    \xleftrightarrow{\mathmakebox[54pt]{\frac{1}{\epsilon^2}-\frac{1}{\epsilon}}} \\
    \gK_5: &\xleftrightarrow{\mathmakebox[4pt]{1}}
    \mathmakebox[14pt]{}
    \xleftrightarrow{\mathmakebox[16pt]{\frac{1}{\epsilon}-1}}
    \mathmakebox[114pt]{}
    \xleftrightarrow{\mathmakebox[54pt]{\frac{1}{\epsilon^2}-\frac{1}{\epsilon}}} \\
    \cline{1-2}
    \gK_{g_1}: &\xleftrightarrow{\mathmakebox[4pt]{1}}
    \mathmakebox[14pt]{}
    \xleftrightarrow{\mathmakebox[16pt]{\frac{1}{\epsilon}-1}} \\
    \cline{1-2}
    \gK_6: &\xleftrightarrow{\mathmakebox[4pt]{1}}
    \mathmakebox[34pt]{}
    \xleftrightarrow{\mathmakebox[74pt]{\frac{1}{\epsilon^2}-1}} \\
    \gK_7: &\xleftrightarrow{\mathmakebox[4pt]{1}}
    \mathmakebox[34pt]{}
    \xleftrightarrow{\mathmakebox[16pt]{\frac{1}{\epsilon}-1}}
    \mathmakebox[54pt]{}
    \xleftrightarrow{\mathmakebox[54pt]{\frac{1}{\epsilon^2}-\frac{1}{\epsilon}}} \\
    \gK_8: &\xleftrightarrow{\mathmakebox[4pt]{1}}
    \mathmakebox[34pt]{}
    \xleftrightarrow{\mathmakebox[16pt]{\frac{1}{\epsilon}-1}}
    \mathmakebox[114pt]{}
    \xleftrightarrow{\mathmakebox[54pt]{\frac{1}{\epsilon^2}-\frac{1}{\epsilon}}} \\
    \cline{1-2}
    \gK_{g_2}: &\xleftrightarrow{\mathmakebox[4pt]{1}}
    \mathmakebox[34pt]{}
    \xleftrightarrow{\mathmakebox[16pt]{\frac{1}{\epsilon}-1}}
\end{align*}

Next, we discuss each path dynamic benchmarking step and find sufficient conditions under which $\gK_t$s are consistent with the dynamic benchmarking routine.
\begin{enumerate}
    \item For the first path dynamic benchmarking step ($h_0 \rightarrow h_1 \rightarrow h_2$), let $\gK_0=[1,\frac{1}{\epsilon^2}]$,
    $\gK_1=[1,\frac{1}{\epsilon}] \cup (\frac{1}{\epsilon^2},\frac{2}{\epsilon^2}-\frac{1}{\epsilon}]$, and
    $\gK_2=[1,\frac{1}{\epsilon}] \cup (2\frac{1}{\epsilon^2}-\frac{1}{\epsilon},3\frac{1}{\epsilon^2}-2\frac{1}{\epsilon}]$. We have
    \begin{align}
        \gK_{g_0} = \gK_0 \cap \gK_1 = \gK_0 \cap \gK_2 = \gK_1 \cap \gK_2 &= [1,\frac{1}{\epsilon}] \\
        \frac{\gD(\gK_{g_0})}{\gD(\gK_1)} \le \frac{\gD(\gK_{g_0})}{\gD(\gK_0)} &\le \epsilon
        .
    \end{align}
    With a similar reasoning as the proof of Theorem~\ref{thm:path_realizable_approx_rm_lowerbound}, the only remaining condition to ensure $h_0$, $h_1$, and $h_2$ are consistent with path dynamic benchmarks, regardless of the weightings, is $\gD_0(\gK_2) \le \epsilon$. The output of this step has the error set of $\gK_{g_0}=[1,\frac{1}{\epsilon}]$ no matter how the weighted majority vote is calculated.
    
    \item For the second path dynamic benchmarking step ($h_3 \rightarrow h_4 \rightarrow h_5$), let
    $\gK_3=[1] \cup (\frac{1}{\epsilon},\frac{1}{\epsilon^2}+\frac{1}{\epsilon}-1]$,
    $\gK_4=[1] \cup (\frac{1}{\epsilon},\frac{2}{\epsilon}-1] \cup (\frac{1}{\epsilon^2}+\frac{1}{\epsilon}-1,\frac{2}{\epsilon^2}-1]$, and
    $\gK_5=[1] \cup (\frac{1}{\epsilon},\frac{2}{\epsilon}-1] \cup (\frac{2}{\epsilon^2}-1, \frac{3}{\epsilon^2}-\frac{1}{\epsilon}-1]$. We have
    \begin{align}
        \gK_{g_1} = \gK_3 \cap \gK_4 = \gK_3 \cap \gK_5 = \gK_4 \cap \gK_5 &= [1] \cup (\frac{1}{\epsilon},\frac{2}{\epsilon}-1] \\
        \frac{\gD(\gK_{g_1})}{\gD(\gK_4)} \le \frac{\gD(\gK_{g_1})}{\gD(\gK_3)} &\le \epsilon \\
        \frac{\gD(\gK_3)}{\gD(\gK_{g_0})} = \frac{\gD(\gK_4)}{\gD(\gK_{g_0})} = \frac{\gD(\gK_5)}{\gD(\gK_{g_0})} &\le \epsilon
        .
    \end{align}
    It is easy to check that for any weighting of mixture components, if $\gD_0(\gK_5)\le\epsilon$, $h_3$, $h_4$, and $h_5$ will be consistent with path dynamic benchmarks and the output of this step has the error set of $\gK_{g_1}=[1] \cup (\frac{1}{\epsilon},\frac{2}{\epsilon}-1]$. Note that $\gD_0(\gK_5)\le\epsilon$ guarantees $\gD_0(\gK_2)$ as well.
    
    \item For the third path dynamic benchmarking step ($h_6 \rightarrow h_7 \rightarrow h_8$), let
    $\gK_6=[1] \cup (\frac{2}{\epsilon}-1,\frac{1}{\epsilon^2}+\frac{2}{\epsilon}-2]$,
    $\gK_7=[1] \cup (\frac{2}{\epsilon}-1,\frac{3}{\epsilon}-2] \cup (\frac{1}{\epsilon^2}+\frac{2}{\epsilon}-2,\frac{2}{\epsilon^2}+\frac{1}{\epsilon}-2]$, and
    $\gK_8=[1] \cup (\frac{2}{\epsilon}-1,\frac{3}{\epsilon}-2] \cup (\frac{2}{\epsilon^2}+\frac{1}{\epsilon}-2, \frac{3}{\epsilon^2}-2]$. We have
    \begin{align}
        \gK_{g_2} = \gK_6 \cap \gK_7 = \gK_6 \cap \gK_8 = \gK_7 \cap \gK_8 &= [1] \cup (\frac{2}{\epsilon}-1,\frac{3}{\epsilon}-2] \\
        \frac{\gD(\gK_{g_2})}{\gD(\gK_7)} \le \frac{\gD(\gK_{g_2})}{\gD(\gK_6)} &\le \epsilon \\
        \frac{\gD(\gK_6)}{\gD(\gK_{g_0})} = \frac{\gD(\gK_7)}{\gD(\gK_{g_0})} = \frac{\gD(\gK_8)}{\gD(\gK_{g_0})} &\le \epsilon
        \\
        \frac{\gD(\gK_6)}{\gD(\gK_{g_1})} = \frac{\gD(\gK_7)}{\gD(\gK_{g_1})} = \frac{\gD(\gK_8)}{\gD(\gK_{g_1})} &\le \epsilon
        .
    \end{align}
    Again, it is straightforward to see that for any weighting, if $\gD_0(\gK_8)\le\epsilon$, $h_6$, $h_6$, and $h_8$ will be consistent with path dynamic benchmarks and the output of this step has the error set of $\gK_{g_2}=[1] \cup (\frac{2}{\epsilon}-1,\frac{3}{\epsilon}-2]$. Note that $\gD_0(\gK_8)\le\epsilon$ guarantees $\gD_0(\gK_5)$ as well.
\end{enumerate}

Putting these all together, to make the provided classifiers consistent with hierarchical dynamic benchmarking, it suffices to show $\gD_0(\gK_8)\le\epsilon$:
\begin{equation}
    \gD_0(\gK_8) \le \gD_0(\frac{3}{\epsilon^2}-2) \Big(\frac{1}{\epsilon^2} - \frac{1}{\epsilon}\Big) \le \epsilon 
    .
\end{equation}
Since axes are ordered in an ascending order of $\mD_0$, $\gD_0(x) \le \frac{1}{d-x+1}$. Otherwise, the sum of the $\mD_0$ elements would go above $1$. So, we have:
\begin{equation}
    \frac{1}{d-(\frac{3}{\epsilon^2})+1} \Big(\frac{1}{\epsilon^2}-\frac{1}{\epsilon}\Big) \le \epsilon
\end{equation}
which imposes $d \ge \frac{1}{\epsilon^3} + \frac{2}{\epsilon^2}-1$. For any valid $\epsilon$, i.e. $\epsilon \ge \frac{1}{2}$, $d = \frac{2}{\epsilon^3}$ satisfies this condition. It also satisfies the other limitation we had on ${\rm VCdim}(\gH)$. In this case 
\begin{equation}
    R_\gD \big(\maj(g_0,g_1,g_2)\big) \ge \gD(\gK_{g_0} \cap \gK_{g_1} \cap \gK_{g_2}) = \gD([1]) 
    .
\end{equation}
For the choice of $\gD=\unif(\gX_d)$, this risk is $\frac{1}{d}=\frac{\epsilon^3}{2}$ which completes the proof.

\end{proofEnd}

Combined, these results show that design structures that are more intricate than the current practice, as captured by the path dynamic benchmark, can yield improved results but also have strong limitations and are challenging to implement in practice.

\section{Experiments}
\label{sec:experiments}

Theorem~\ref{thm:path_realizable_approx_rm_lowerbound}, provides an $\Omega(\epsilon^2)$ lower bound on the risk achievable with an $\epsilon$-approximate risk minimizer. The proof of this theorem is constructive, introducing a bad sequence of distributions and classifiers consistent with the path design with $\Theta(\epsilon^2)$~error. But the theorem does not rule out the existence of a good sequence achieving arbitrarily small risk. An important question is how frequently bad sequences appear in practice, retaining the error above zero even after many rounds.

We study this question by simulating path dynamic benchmarks on two popular static benchmarks, CIFAR-10~\citep{cifar10} and SNLI~\citep{snli}. The details of the data and models are reported in Section~\ref{subsec:data} of Appendix. Our aim in these experiments is not to obtain a state-of-the-art model. Instead, we want to study the effectiveness of path dynamic benchmarks in a controlled experimental setting with light models. Our simulation design is similar for both datasets: 
\begin{enumerate}[leftmargin=*]
    \item Train a \emph{base classifier} on the whole dataset and fix it for the next steps.
    \item Construct a new dataset from samples correctly classified by the base model and define the true and initial distributions as a uniform (point mass) distribution over these samples. Note that in this case empirical risk on samples weighted according to a point mass distribution is equivalent to risk on that distribution. Since the base model correctly labels all selected samples, the problem is also realizable.
    \item Draw multiple \emph{rollouts} of path dynamic benchmarks. A rollout from path dynamic benchmark is a sequence of distributions and models obtained by alternatingly training a new classifier on the weighted extracted dataset and up-weighting (down-weighting) the distribution over misclassified (correctly classified) samples according to a mixture rule with uniform weights. Note that the base model is fixed, and the randomness across rollouts solely comes from different initializations of new classifiers and possible randomness in optimization methods.
\end{enumerate}

There are two deviations from our theoretical study in this design: First, we studied binary classification, but both CIFAR-10 and SNLI define multi-label problems. We believe our main arguments hold for multi-label tasks as well. Second, although the problem is realizable, the solution is unlikely to have zero risk as training a new classifier at each round of a rollout typically consists of non-convex optimization. This is addressed in our theoretical framework as $\epsilon$-approximate risk minimization. Our analysis requires a fixed $\epsilon$ across all rounds which approximately holds in practice.

Figure~\ref{fig:acc_path_dyn_bm_cnn_cifar10} shows results from $100$~rollouts of path dynamic benchmarks simulated on CIFAR-10. At each round, the value on the vertical axis shows the risk of the majority vote of all the classifiers obtained in a rollout so far. The solid line is the average of the majority vote's risk of all rollouts, and the shaded area shows the standard deviation. There are a few observations: First, although zero risk is attainable, path dynamic benchmarks fail to reach it. Second, the variance across rollouts is quite consistent at each round. This shows a good rollout keeps being a good, and likewise with a bad rollout, so continuing dynamic benchmarking does not help. We further discuss why a rollout turns out to be a good or bad one and how this is related to our theoretical negative example in Section~\ref{subsec:further_observations} of the Appendix.

Figure~\ref{fig:acc_path_dyn_bm_light_bowman_snli50k} shows the results from $50$~rollouts of path dynamic benchmarks simulated on SNLI. A similar observation regarding non-zero risk in limit can be made here. Compared to the image classification task, path dynamic benchmarks in the NLI task show more fluctuation through rounds which might be due to the harder nature of the problem or a more complex model. 

Summarizing these observations, path dynamic benchmarks, no matter how long we run them for, confront a lower bound even in simple realizable settings. This provides empirical evidence that our negative results are not contrived, but point at what may be inherent limitations to dynamic benchmarking.

\begin{figure}[t]
\centering
\begin{subfigure}[t]{0.49\linewidth}
    \centering
    \includegraphics[width=0.74\linewidth]{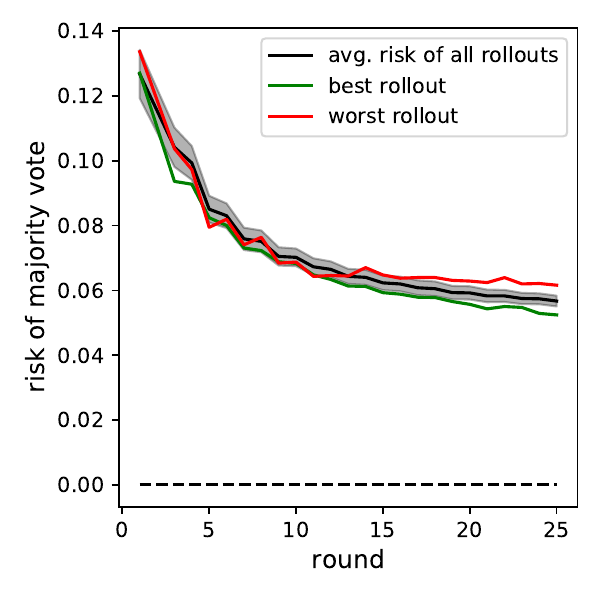}
    \caption{CIFAR-10}
    \label{fig:acc_path_dyn_bm_cnn_cifar10}
\end{subfigure}
\hfill
\begin{subfigure}[t]{0.49\linewidth}
    \centering
    \includegraphics[width=0.74\linewidth]{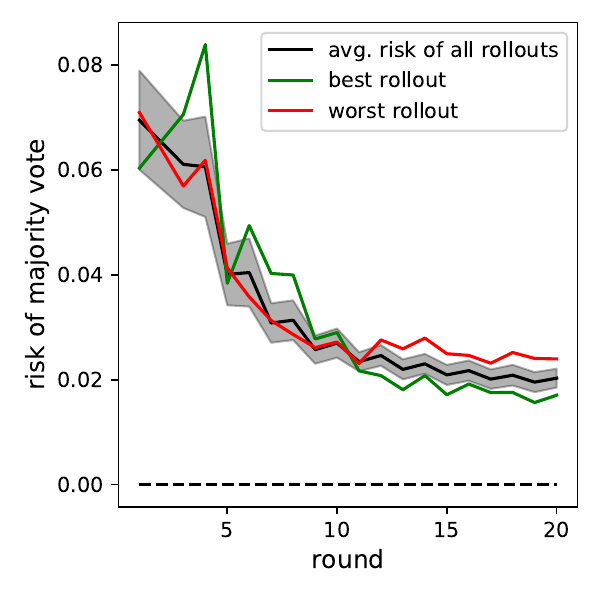}
    \caption{SNLI}
    \label{fig:acc_path_dyn_bm_light_bowman_snli50k}
\end{subfigure}
\caption{Path dynamic benchmark is simulated. The base model is kept fixed for all rollouts. The y-axis shows the risk of the majority vote of all classifiers obtained until that round.}
\end{figure}

\section{Discussion}

The scientific foundation of traditional benchmarks is the holdout method, whereby we split the dataset into a training and testing component. Although the machine learning community routinely operates well outside the statistical guarantees of the holdout method, at least there is some cognizable theoretical framework for static datasets as benchmarks. Moreover, more recent works have found new support for static benchmarks~\citep{blum2015ladder, recht2019imagenet}.

Dynamic benchmarks provide an intriguing proposal aimed at mitigating known issues with static benchmarks. Platforms for dynamic benchmarks are already up and running. However, machine learning theory has little to offer in the way of guiding the principled design of valid dynamic benchmarks. Responding to this lack of theoretical foundations, we propose a formal model of dynamic benchmarks that allows to combine model building and data collection steps in a flexible manner. We focus on what we think is the first-order concern in the design of a benchmark: Does the benchmark, in principle, induce the creation of better models?  

While our results show a provable benefit to dynamic benchmarking, it is arguably more subtle than hoped and comes at a significant increase in complexity. Our negative results are particularly concerning given that we optimistically assume that both model builders and annotators have significant competence. In practice, dynamic benchmarks likely face additional obstacles neglected by our idealized assumptions. As a result, it is less clear to what extent our positive results have prescriptive value. On the other hand, our positive results make it clear that the design space for dynamic benchmarks is larger than currently utilized, thus pointing at a range of interesting open problems. Finally, lowering risk or improving the accuracy of the induced models is the ultimate success criterion of a benchmark, however, there are other concerns including the robustness of the models that can be the topic of future works based on our proposed framework.


\bibliography{refs}

\begin{thebibliography}{30}
\providecommand{\natexlab}[1]{#1}
\providecommand{\url}[1]{\texttt{#1}}
\expandafter\ifx\csname urlstyle\endcsname\relax
  \providecommand{\doi}[1]{doi: #1}\else
  \providecommand{\doi}{doi: \begingroup \urlstyle{rm}\Url}\fi

\bibitem[Bartolo et~al.(2021)Bartolo, Thrush, Riedel, Stenetorp, Jia, and
  Kiela]{bartolo2021models}
Max Bartolo, Tristan Thrush, Sebastian Riedel, Pontus Stenetorp, Robin Jia, and
  Douwe Kiela.
\newblock Models in the loop: Aiding crowdworkers with generative annotation
  assistants.
\newblock \emph{CoRR}, abs/2112.09062, 2021.
\newblock URL \url{https://arxiv.org/abs/2112.09062}.

\bibitem[Ben-David et~al.(2010)Ben-David, Blitzer, Crammer, Kulesza, Pereira,
  and Vaughan]{dHDeltaH}
Shai Ben-David, John Blitzer, Koby Crammer, Alex Kulesza, Fernando Pereira, and
  Jennifer~Wortman Vaughan.
\newblock A theory of learning from different domains.
\newblock \emph{Machine learning}, 79\penalty0 (1):\penalty0 151--175, 2010.

\bibitem[Blum \& Hardt(2015)Blum and Hardt]{blum2015ladder}
Avrim Blum and Moritz Hardt.
\newblock The ladder: A reliable leaderboard for machine learning competitions.
\newblock In \emph{International Conference on Machine Learning}, pp.\
  1006--1014. PMLR, 2015.

\bibitem[Bowman \& Dahl(2021)Bowman and Dahl]{bowman2021will}
Samuel Bowman and George Dahl.
\newblock What will it take to fix benchmarking in natural language
  understanding?
\newblock In \emph{Proc.~of {NAACL}}, pp.\  4843--4855, 2021.

\bibitem[Bowman et~al.(2015)Bowman, Angeli, Potts, and Manning]{snli}
Samuel~R. Bowman, Gabor Angeli, Christopher Potts, and Christopher~D. Manning.
\newblock A large annotated corpus for learning natural language inference.
\newblock In \emph{Proceedings of the 2015 Conference on Empirical Methods in
  Natural Language Processing}, pp.\  632--642, Lisbon, Portugal, September
  2015. Association for Computational Linguistics.
\newblock \doi{10.18653/v1/D15-1075}.
\newblock URL \url{https://aclanthology.org/D15-1075}.

\bibitem[Davani et~al.(2022)Davani, D{\'\i}az, and
  Prabhakaran]{davani2022dealing}
Aida~Mostafazadeh Davani, Mark D{\'\i}az, and Vinodkumar Prabhakaran.
\newblock Dealing with disagreements: Looking beyond the majority vote in
  subjective annotations.
\newblock \emph{Transactions of the Association for Computational Linguistics},
  10:\penalty0 92--110, 2022.

\bibitem[Dinan et~al.(2019)Dinan, Humeau, Chintagunta, and
  Weston]{builditbreakit}
Emily Dinan, Samuel Humeau, Bharath Chintagunta, and Jason Weston.
\newblock Build it break it fix it for dialogue safety: Robustness from
  adversarial human attack.
\newblock In \emph{Proceedings of the 2019 Conference on Empirical Methods in
  Natural Language Processing and the 9th International Joint Conference on
  Natural Language Processing (EMNLP-IJCNLP)}, pp.\  4537--4546, Hong Kong,
  China, November 2019. Association for Computational Linguistics.
\newblock \doi{10.18653/v1/D19-1461}.
\newblock URL \url{https://www.aclweb.org/anthology/D19-1461}.

\bibitem[Dwork et~al.(2015)Dwork, Feldman, Hardt, Pitassi, Reingold, and
  Roth]{dwork2015preserving}
Cynthia Dwork, Vitaly Feldman, Moritz Hardt, Toniann Pitassi, Omer Reingold,
  and Aaron~Leon Roth.
\newblock Preserving statistical validity in adaptive data analysis.
\newblock In \emph{Proceedings of the forty-seventh annual ACM symposium on
  Theory of computing}, pp.\  117--126, 2015.

\bibitem[Gehrmann et~al.(2021)Gehrmann, Adewumi, Aggarwal, Ammanamanchi, Aremu,
  Bosselut, Chandu, Clinciu, Das, Dhole, Du, Durmus, Du{\v{s}}ek, Emezue,
  Gangal, Garbacea, Hashimoto, Hou, Jernite, Jhamtani, Ji, Jolly, Kale, Kumar,
  Ladhak, Madaan, Maddela, Mahajan, Mahamood, Majumder, Martins,
  McMillan-Major, Mille, van Miltenburg, Nadeem, Narayan, Nikolaev,
  Niyongabo~Rubungo, Osei, Parikh, Perez-Beltrachini, Rao, Raunak, Rodriguez,
  Santhanam, Sedoc, Sellam, Shaikh, Shimorina, Sobrevilla~Cabezudo, Strobelt,
  Subramani, Xu, Yang, Yerukola, and Zhou]{gehrmann2021gem}
Sebastian Gehrmann, Tosin Adewumi, Karmanya Aggarwal, Pawan~Sasanka
  Ammanamanchi, Anuoluwapo Aremu, Antoine Bosselut, Khyathi~Raghavi Chandu,
  Miruna-Adriana Clinciu, Dipanjan Das, Kaustubh Dhole, Wanyu Du, Esin Durmus,
  Ond{\v{r}}ej Du{\v{s}}ek, Chris~Chinenye Emezue, Varun Gangal, Cristina
  Garbacea, Tatsunori Hashimoto, Yufang Hou, Yacine Jernite, Harsh Jhamtani,
  Yangfeng Ji, Shailza Jolly, Mihir Kale, Dhruv Kumar, Faisal Ladhak, Aman
  Madaan, Mounica Maddela, Khyati Mahajan, Saad Mahamood, Bodhisattwa~Prasad
  Majumder, Pedro~Henrique Martins, Angelina McMillan-Major, Simon Mille, Emiel
  van Miltenburg, Moin Nadeem, Shashi Narayan, Vitaly Nikolaev, Andre
  Niyongabo~Rubungo, Salomey Osei, Ankur Parikh, Laura Perez-Beltrachini,
  Niranjan~Ramesh Rao, Vikas Raunak, Juan~Diego Rodriguez, Sashank Santhanam,
  Jo{\~a}o Sedoc, Thibault Sellam, Samira Shaikh, Anastasia Shimorina,
  Marco~Antonio Sobrevilla~Cabezudo, Hendrik Strobelt, Nishant Subramani, Wei
  Xu, Diyi Yang, Akhila Yerukola, and Jiawei Zhou.
\newblock The {GEM} benchmark: Natural language generation, its evaluation and
  metrics.
\newblock In \emph{Proceedings of the 1st Workshop on Natural Language
  Generation, Evaluation, and Metrics (GEM 2021)}, pp.\  96--120, Online,
  August 2021. Association for Computational Linguistics.
\newblock \doi{10.18653/v1/2021.gem-1.10}.
\newblock URL \url{https://aclanthology.org/2021.gem-1.10}.

\bibitem[Hardt \& Recht(2022)Hardt and Recht]{hardtrecht}
Moritz Hardt and Benjamin Recht.
\newblock \emph{Patterns, predictions, and actions: Foundations of machine
  learning}.
\newblock Princeton University Press, 2022.

\bibitem[Kaushik et~al.(2021)Kaushik, Kiela, Lipton, and
  Yih]{kaushik2021efficacy}
Divyansh Kaushik, Douwe Kiela, Zachary~C. Lipton, and Wen-tau Yih.
\newblock On the efficacy of adversarial data collection for question
  answering: Results from a large-scale randomized study.
\newblock In \emph{Proceedings of the 59th Annual Meeting of the Association
  for Computational Linguistics and the 11th International Joint Conference on
  Natural Language Processing (Volume 1: Long Papers)}, pp.\  6618--6633,
  Online, August 2021. Association for Computational Linguistics.
\newblock \doi{10.18653/v1/2021.acl-long.517}.
\newblock URL \url{https://aclanthology.org/2021.acl-long.517}.

\bibitem[Khashabi et~al.(2021)Khashabi, Stanovsky, Bragg, Lourie, Kasai, Choi,
  Smith, and Weld]{khashabi2021genie}
Daniel Khashabi, Gabriel Stanovsky, Jonathan Bragg, Nicholas Lourie, Jungo
  Kasai, Yejin Choi, Noah~A Smith, and Daniel~S Weld.
\newblock Genie: A leaderboard for human-in-the-loop evaluation of text
  generation.
\newblock \emph{arXiv preprint arXiv:2101.06561}, 2021.

\bibitem[Kiela et~al.(2021)Kiela, Bartolo, Nie, Kaushik, Geiger, Wu, Vidgen,
  Prasad, Singh, Ringshia, Ma, Thrush, Riedel, Waseem, Stenetorp, Jia, Bansal,
  Potts, and Williams]{dynabench}
Douwe Kiela, Max Bartolo, Yixin Nie, Divyansh Kaushik, Atticus Geiger,
  Zhengxuan Wu, Bertie Vidgen, Grusha Prasad, Amanpreet Singh, Pratik Ringshia,
  Zhiyi Ma, Tristan Thrush, Sebastian Riedel, Zeerak Waseem, Pontus Stenetorp,
  Robin Jia, Mohit Bansal, Christopher Potts, and Adina Williams.
\newblock Dynabench: Rethinking benchmarking in {NLP}.
\newblock In \emph{Proceedings of the 2021 Conference of the North American
  Chapter of the Association for Computational Linguistics: Human Language
  Technologies}, pp.\  4110--4124, Online, June 2021. Association for
  Computational Linguistics.
\newblock \doi{10.18653/v1/2021.naacl-main.324}.
\newblock URL \url{https://aclanthology.org/2021.naacl-main.324}.

\bibitem[Krizhevsky et~al.(2009)Krizhevsky, Hinton, et~al.]{cifar10}
Alex Krizhevsky, Geoffrey Hinton, et~al.
\newblock Learning multiple layers of features from tiny images.
\newblock 2009.

\bibitem[Le~Bras et~al.(2020)Le~Bras, Swayamdipta, Bhagavatula, Zellers,
  Peters, Sabharwal, and Choi]{le2020adversarial}
Ronan Le~Bras, Swabha Swayamdipta, Chandra Bhagavatula, Rowan Zellers, Matthew
  Peters, Ashish Sabharwal, and Yejin Choi.
\newblock Adversarial filters of dataset biases.
\newblock In \emph{International Conference on Machine Learning}, pp.\
  1078--1088. PMLR, 2020.

\bibitem[Ma et~al.(2021)Ma, Ethayarajh, Thrush, Jain, Wu, Jia, Potts, Williams,
  and Kiela]{ma2021dynaboard}
Zhiyi Ma, Kawin Ethayarajh, Tristan Thrush, Somya Jain, Ledell Wu, Robin Jia,
  Christopher Potts, Adina Williams, and Douwe Kiela.
\newblock Dynaboard: An evaluation-as-a-service platform for holistic
  next-generation benchmarking.
\newblock \emph{Advances in Neural Information Processing Systems}, 34, 2021.

\bibitem[Narayan et~al.(2021)Narayan, Molino, Goel, Neiswanger, and
  Re]{narayan2021personalized}
Avanika Narayan, Piero Molino, Karan Goel, Willie Neiswanger, and Christopher
  Re.
\newblock Personalized benchmarking with the ludwig benchmarking toolkit.
\newblock In \emph{Thirty-fifth Conference on Neural Information Processing
  Systems Datasets and Benchmarks Track (Round 1)}, 2021.
\newblock URL \url{https://openreview.net/forum?id=hwjnu6qW7E4}.

\bibitem[Nie et~al.(2020)Nie, Williams, Dinan, Bansal, Weston, and Kiela]{anli}
Yixin Nie, Adina Williams, Emily Dinan, Mohit Bansal, Jason Weston, and Douwe
  Kiela.
\newblock Adversarial {NLI}: A new benchmark for natural language
  understanding.
\newblock In \emph{Proceedings of the 58th Annual Meeting of the Association
  for Computational Linguistics}. Association for Computational Linguistics,
  2020.

\bibitem[Paperno et~al.(2016)Paperno, Kruszewski, Lazaridou, Pham, Bernardi,
  Pezzelle, Baroni, Boleda, and Fern{\'a}ndez]{paperno2016lambada}
Denis Paperno, Germ{\'a}n Kruszewski, Angeliki Lazaridou, Ngoc~Quan Pham,
  Raffaella Bernardi, Sandro Pezzelle, Marco Baroni, Gemma Boleda, and Raquel
  Fern{\'a}ndez.
\newblock The {LAMBADA} dataset: Word prediction requiring a broad discourse
  context.
\newblock In \emph{Proceedings of the 54th Annual Meeting of the Association
  for Computational Linguistics (Volume 1: Long Papers)}, pp.\  1525--1534,
  Berlin, Germany, August 2016. Association for Computational Linguistics.
\newblock \doi{10.18653/v1/P16-1144}.
\newblock URL \url{https://aclanthology.org/P16-1144}.

\bibitem[Pavlick \& Kwiatkowski(2019)Pavlick and
  Kwiatkowski]{pavlick2019inherent}
Ellie Pavlick and Tom Kwiatkowski.
\newblock Inherent disagreements in human textual inferences.
\newblock \emph{Transactions of the Association for Computational Linguistics},
  7:\penalty0 677--694, 2019.

\bibitem[Phang et~al.(2022)Phang, Chen, Huang, and
  Bowman]{phang2021adversarially}
Jason Phang, Angelica Chen, William Huang, and Samuel~R. Bowman.
\newblock Adversarially constructed evaluation sets are more challenging, but
  may not be fair.
\newblock In \emph{Proceedings of the First Workshop on Dynamic Adversarial
  Data Collection}, pp.\  62--62, Seattle, WA, July 2022. Association for
  Computational Linguistics.
\newblock \doi{10.18653/v1/2022.dadc-1.8}.
\newblock URL \url{https://aclanthology.org/2022.dadc-1.8}.

\bibitem[Potts et~al.(2021)Potts, Wu, Geiger, and Kiela]{dynasent}
Christopher Potts, Zhengxuan Wu, Atticus Geiger, and Douwe Kiela.
\newblock {D}yna{S}ent: A dynamic benchmark for sentiment analysis.
\newblock In \emph{Proceedings of the 59th Annual Meeting of the Association
  for Computational Linguistics and the 11th International Joint Conference on
  Natural Language Processing (Volume 1: Long Papers)}, pp.\  2388--2404,
  Online, August 2021. Association for Computational Linguistics.
\newblock \doi{10.18653/v1/2021.acl-long.186}.
\newblock URL \url{https://aclanthology.org/2021.acl-long.186}.

\bibitem[Prabhakaran et~al.(2021)Prabhakaran, Mostafazadeh~Davani, and
  Diaz]{prabhakaran2021releasing}
Vinodkumar Prabhakaran, Aida Mostafazadeh~Davani, and Mark Diaz.
\newblock On releasing annotator-level labels and information in datasets.
\newblock In \emph{Proceedings of the Joint 15th Linguistic Annotation Workshop
  (LAW) and 3rd Designing Meaning Representations (DMR) Workshop}, pp.\
  133--138, Punta Cana, Dominican Republic, November 2021. Association for
  Computational Linguistics.
\newblock \doi{10.18653/v1/2021.law-1.14}.
\newblock URL \url{https://aclanthology.org/2021.law-1.14}.

\bibitem[Recht et~al.(2019)Recht, Roelofs, Schmidt, and
  Shankar]{recht2019imagenet}
Benjamin Recht, Rebecca Roelofs, Ludwig Schmidt, and Vaishaal Shankar.
\newblock Do imagenet classifiers generalize to imagenet?
\newblock In \emph{International Conference on Machine Learning}, pp.\
  5389--5400. PMLR, 2019.

\bibitem[Schapire \& Singer(1999)Schapire and Singer]{adaboost}
Robert~E Schapire and Yoram Singer.
\newblock Improved boosting algorithms using confidence-rated predictions.
\newblock \emph{Machine learning}, 37\penalty0 (3):\penalty0 297--336, 1999.

\bibitem[Shirali(2022)]{shirali2022sequential}
Ali Shirali.
\newblock Sequential nature of recommender systems disrupts the evaluation
  process.
\newblock In \emph{Advances in Bias and Fairness in Information Retrieval:
  Third International Workshop, BIAS 2022, Stavanger, Norway, April 10, 2022,
  Revised Selected Papers}, pp.\  21--34. Springer, 2022.

\bibitem[Taori et~al.(2020)Taori, Dave, Shankar, Carlini, Recht, and
  Schmidt]{taori2020measuring}
Rohan Taori, Achal Dave, Vaishaal Shankar, Nicholas Carlini, Benjamin Recht,
  and Ludwig Schmidt.
\newblock Measuring robustness to natural distribution shifts in image
  classification.
\newblock \emph{Advances in Neural Information Processing Systems},
  33:\penalty0 18583--18599, 2020.

\bibitem[Wallace et~al.(2022)Wallace, Williams, Jia, and Kiela]{anli_in_limit}
Eric Wallace, Adina Williams, Robin Jia, and Douwe Kiela.
\newblock Analyzing dynamic adversarial training data in the limit.
\newblock In \emph{Findings of the Association for Computational Linguistics:
  ACL 2022}, pp.\  202--217, Dublin, Ireland, May 2022. Association for
  Computational Linguistics.
\newblock \doi{10.18653/v1/2022.findings-acl.18}.
\newblock URL \url{https://aclanthology.org/2022.findings-acl.18}.

\bibitem[Zellers et~al.(2018)Zellers, Bisk, Schwartz, and Choi]{swag}
Rowan Zellers, Yonatan Bisk, Roy Schwartz, and Yejin Choi.
\newblock {SWAG}: A large-scale adversarial dataset for grounded commonsense
  inference.
\newblock In \emph{Proceedings of the 2018 Conference on Empirical Methods in
  Natural Language Processing}, pp.\  93--104, Brussels, Belgium,
  October-November 2018. Association for Computational Linguistics.
\newblock \doi{10.18653/v1/D18-1009}.
\newblock URL \url{https://www.aclweb.org/anthology/D18-1009}.

\bibitem[Zellers et~al.(2021)Zellers, Holtzman, Clark, Qin, Farhadi, and
  Choi]{turingadvice}
Rowan Zellers, Ari Holtzman, Elizabeth Clark, Lianhui Qin, Ali Farhadi, and
  Yejin Choi.
\newblock {T}uring{A}dvice: A generative and dynamic evaluation of language
  use.
\newblock In \emph{Proceedings of the 2021 Conference of the North American
  Chapter of the Association for Computational Linguistics: Human Language
  Technologies}, pp.\  4856--4880, Online, June 2021. Association for
  Computational Linguistics.
\newblock URL \url{https://www.aclweb.org/anthology/2021.naacl-main.386}.

\end{thebibliography}
\bibliographystyle{template/iclr2023_conference}

\appendix
\section{Dynamic gradient-based update}
\label{sec:gradient_based_update}

The current practice of dynamic benchmarking aims at diversifying the benchmark by dynamically adding new \emph{hard} examples to it. The hope is to obtain better models from the benchmark every time new samples are added. As we observed, despite the initial benefit this method has, there exist arbitrarily long dynamic benchmarks with no significant improvement after the first few steps. 

But if the ultimate goal is to obtain improved models, there is a more direct way to do it. Instead of keeping all the adversarial feedback in the form of a dynamic benchmark, the model-in-the-loop itself can carry all the information by becoming more and more complex throughout the process. In other words, rather than updating the benchmark by adversarial examples, these examples can be collected in a way that can be directly used to update the model. Dynamic adversarial feedback from annotators is helpful here by providing fresh examples at each round that prevent overfitting. This phenomenon is also close to the boosting technique in machine learning. 

In this section, we discuss how directly updating the model using adversarial data can be formulated within our framework and why they are not practical. Since we use gradient-based techniques to update the model, we call these methods dynamic gradient-based updating. We first formally introduce a vector notation for functions and distributions, which makes our arguments easier to follow. Then we discuss how a classifier's risk with respect to the zero-one loss would be represented with this notation. In search of a classifier that minimizes this risk, we minimize surrogate convex losses rather than directly optimizing for zero-one risk. Here we discuss two popular choices, to be named hinge and exponential losses, and for each, discuss the corresponding method with its strengths and limitations.

\paragraph{Notation.} Let $h: \gX \rightarrow \{-1, 1\}$ be a binary classifier defined on the finite set $\gX$. The vector representation of $h$ in $\gX$ is $\vh = (h(x))_{x \in \gX}$. Similarly, let $\gP$ be a probability distribution over $\gX$. The vector representation of $\gP$ in $\gX$ is $\mP = (\gP(x))_{x \in \gX}$. The entrywise product of two vectors $\vh_1$ and $\vh_2$ is denoted by $\vh_1 \circ \vh_2$.
For an underlying distribution~$\gD$ and true classifier~$f$, we can use vector representations to write the risk with respect to the zero-one loss. Risk of a binary classifier~$h$ on $\gD$ is $R_\gD(h) = \frac{1}{2} \langle \vone - \vh \circ \vf, \mD \rangle$. For a general $h: \gX \rightarrow \R$, still we can define the risk with respect to the zero-one loss as
$R_\gD(h) = \frac{1}{2}\langle \vone - \sign(\vh \circ \vf), \mD \rangle$, where $\sign(\cdot)$ is an entrywise operator. 

\paragraph{Upper-bounded risk.} There are many ways to upper-bound $R_\gD(h)$. For example, for any entrywise function $l(\cdot)$ such that $l(x) \ge \One\{x \le 0\} = \frac{1}{2} - \frac{1}{2} \sign(x)$ for all $x \in \R$, the risk of $h$ with respect to the zero-one loss can be upper-bounded by
\begin{equation}
\label{eq:general_surrogate_vec_risk}
    R_\gD(h) \le R_\gD^l(h) = \langle l(\vh \circ \vf), \mD \rangle
    .
\end{equation}
%

\subsection{Minimizing hinge Loss}
A popular function to upper-bound the zero-one risk is the hinge loss: $l(x) = \max(1 - x, 0)$. Plugging $l(\cdot)$ into Equation~\ref{eq:general_surrogate_vec_risk} gives:
\begin{equation}
    R_\gD(h) \le R^{\rm hinge}_\gD(h) = 
    \langle \max(\vone - \vh \circ \vf, \vzero), \mD \rangle
    ,
\end{equation}
where $\max(\cdot)$ is element-wise maximization. Let $\vg = \nabla_h R_\gD^{\rm hinge}$. Looking for an update of the form $\vh \coloneqq \vh + \Delta \vh$ to reduce $R_\gD^{\rm hinge}(h)$, any direction such that $\langle \vg, \Delta \vh \rangle < 0$ will be a descent direction and a small step size guarantees consistent decrease of $R_\gD^{\rm hinge}$. As we will show in the proof of Lemma~\ref{lemma:hinge_loss_descent}, directly applying gradient descent, i.e., $\Delta \vh = -\eta \, \vg$, is not practical, as it incorporates summation of a distribution and a classifier vector. Unlike classifiers which are known for every point in the domain, in practice, distributions are limited to the available samples and this summation is not implementable. Alternatively, let $E_h=\{x \in \gX \mid h(x) f(x) < 1\}$ be the set of margin errors of classifier~$h$. We task annotators to return $\bar{\gD}_h = \gD|_{E_h}$ given $h$. Let $\bar{h} = \gA(\bar{\gD}_h)$ be the model built on the vulnerabilities of $h$. Next lemma shows $\bar{\vh}$ is a descent direction for the hinge loss.

\begin{theoremEnd}{lemma}
\label{lemma:hinge_loss_descent}
For any hypothesis class~$\gH$, true classifier~$f \in \gH$, current classifier $h \in \gH$, $\epsilon$\nobreakdash-approximate risk minimizer~$\gA$, and any underlying distribution~$\gD$, the vector representation~$\bar{\vh}$ of the classifier~$\bar{h}=\gA(\gD|_{h(x) f(x) < 1})$ is a descent direction for $R_\gD^{\rm hinge}(h)$.  
\end{theoremEnd}
\begin{proofEnd}
The derivative of $R_\gD^{\rm hinge}(h)$ w.r.t $h(x)$ is:
\begin{equation}
    g(x) \coloneqq \frac{\partial R_\gD^{\rm hinge}}{\partial h(x)} = \begin{cases}
        0 & h(x) f(x) \ge 1 \\
        -f(x) \gD(x) & o.w.
    \end{cases}
    .
\end{equation}
We can write $\vg$ with vector representations as
\begin{equation}
    \vg = -\Pr_\gD(E_h) \, (\vf \circ \bar{\mD}_h) =
    -R^{\rm hinge}_\gD(h) \, (\vf \circ \bar{\mD}_h)
    .
\end{equation}
As mentioned earlier, updates of the form $\vh \coloneqq \vh - \eta \vg$ are infeasible as in practice $\bar{\mD}_h$ is only available at limited samples. To make it practical, let $\bar{h} = \gA(\bar{\gD}_h)$. As $\gA$ is an $\epsilon$-approximate risk minimizer,  
\begin{equation}
    R_{\bar{\gD}_h}(\bar{h}) = 
    \frac{1}{2} \langle \vone - \bar{\vh} \circ \vf, \bar{\mD}_h \rangle =
    \frac{1}{2} - \frac{1}{2} \langle \bar{\vh}, \vf \circ \bar{\mD}_h \rangle \le \epsilon
    .
\end{equation}
So, we have $\langle \bar{\vh}, \vf \circ \bar{\mD}_h \rangle \ge 1 - 2\epsilon$. For $\epsilon < 0.5$, $\bar{\vh}$ will be a descent direction. 
\end{proofEnd}

This lemma lets us write the updating rule $\vh \coloneqq \vh + \eta \, R^{\rm hinge}_\gD(h) \, \bar{\vh}$, depicted graphically in Figure~\ref{fig:hinge_graph}. Since gradient dominance condition holds for this update and hinge loss is $1$-Lipschitz, $\vh$ will converge to $\vf$ with the rate of $O(\frac{|\gX|}{t^2})$. Although this method guarantees convergence, the dependence on the domain size makes the bound useless for continuous or very large domains.

\begin{figure}[t!]
    \centering
    \includegraphics[width=0.65\linewidth]{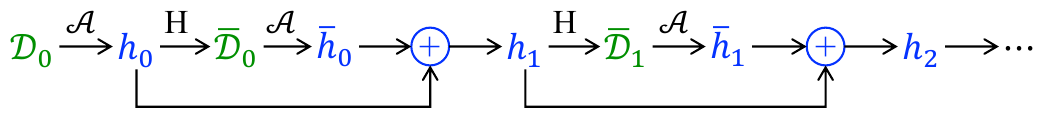}
    \caption{Dynamic gradient-based update: hinge loss minimization.}
    \label{fig:hinge_graph}
\end{figure}

\subsection{Minimizing exponential loss}
Another candidate function to upper-bound the zero-one risk is the exponential loss: $l(x) = \exp(-x)$. This leads to a similar analysis as the AdaBoost algorithm~\citep{adaboost}. Plugging $l(\cdot)$ into Equation~\ref{eq:general_surrogate_vec_risk} gives:
\begin{equation}
\label{eq:vec_risk_exp}
    R_\gD(h) \le R_\gD^{\rm exp}(h) = 
    \langle \exp(-\vh \circ \vf), \mD \rangle
    ,
\end{equation}
where $\exp(\cdot)$ is element-wise exponential function. Similar to the hinge loss minimization, we show in the proof of Lemma~\ref{lemma:boosting} that directly updating $\vh$ with a gradient term is not implementable. So, we search in the hypothesis class for a classifier~$\tilde{h}$ such that $\tilde{\vh}$ minimizes $\langle \tilde{\vh}, \vg \rangle$, where $\vg = \nabla_h R_\gD^{\rm exp}$. Next lemma finds such a classifier along with the optimal step size.

\begin{theoremEnd}{lemma}
\label{lemma:boosting}
For any hypothesis class~$\gH$, true classifier~$f \in \gH$, current classifier $h \in \gH$, $\epsilon$\nobreakdash-approximate risk minimizer~$\gA$, and any underlying distribution~$\gD$, 
$\tilde{h}=\gA(\gD_h)$ is the solution of $\min_{\tilde{h} \in \gH} \; \langle \tilde{\vh}, \vg \rangle$. Here $\gD_h(x) \propto \gD(x) \exp(-h(x)f(x))$ and $\vg = \nabla_h R_\gD^{\rm exp}$.
Further, $\eta=\frac{1}{2} \log(\frac{1}{R_{\gD_h}(\tilde{h})} - 1)$ is the best step size for the update rule $\vh \coloneqq \vh + \eta \, \tilde{\vh}$.
\end{theoremEnd}

\begin{proofEnd}
The derivative of $R_\gD^{\rm exp}(h)$ w.r.t $h(x)$ is
\begin{equation}
    g(x) \coloneqq \frac{\partial R_\gD^{\rm \exp}}{\partial h(x)} = 
    -f(x) \gD(x) \exp(-h(x) f(x))
\end{equation}
or in the vector space
\begin{equation}
    \vg = - \vf \circ \mD \circ \exp(-\vh \circ \vf)
    .
\end{equation}
To find a good descent direction, we solve for
\begin{equation}
    \min_{\tilde{h} \in \gH} \; \langle \tilde{\vh}, \vg \rangle =
    \langle -\tilde{\vh} \circ \vf, \mD \circ \exp(-\vh \circ \vf) \rangle
    .
\end{equation}
The solution to this problem is the minimizer of $R_{\gD_h}(\tilde{h})$ where $\gD_h$ is the true distribution weighted according to $\gD_h(x) = \frac{1}{Z_h} \gD(x) \exp(-h(x)f(x))$ and $Z_h$ is a normalization factor to make sure $\gD_h$ will be a probability distribution. Let $\tilde{h} = \gA(\gD_h)$. Given $\gA$ is $\epsilon$-approximate risk minimizer,
\begin{equation}
    R_{\gD_h}(\tilde{h}) = \frac{1}{2} \langle \vone - \tilde{\vh} \circ \vf, \mD_h \rangle =
    \frac{1}{2} - \frac{1}{2 Z_h} \langle \tilde{\vh} \circ \vf, \mD \circ \exp(-\vh \circ \vf) \rangle \le \epsilon
    .
\end{equation}
So, we have $\langle \tilde{\vh} \circ \vf, \mD \circ \exp(-\vh \circ \vf) \rangle \ge Z_h (1 - 2\epsilon)$. For $\epsilon < 0.5$, $\tilde{\vh}$ will be a descent direction and the updating rule is
\begin{equation}
\label{eq:upd_rule_exp}
    \vh \coloneqq \vh + \eta \, \tilde{\vh}
    .
\end{equation}

Finally, in order to find the optimum step size $\eta$, we plug the updated $\vh$ into Equation~\ref{eq:vec_risk_exp} and solve for
\begin{equation}
    \min_\eta \; R_\gD(h + \eta \, \tilde{h}) = \langle \exp(-(\vh + \eta \, \tilde{\vh}) \circ \vf), \mD \rangle. 
\end{equation}
Since the objective is a convex function of $\eta$, it suffices to check the first order condition:
\begin{align}
    \frac{\partial R_\gD(h + \eta \, \tilde{h})}{\partial \eta} 
    &= \langle \tilde{\vh} \circ \vf \circ \exp(-(\vh + \eta \, \tilde{\vh}) \circ \vf), \mD \rangle \nonumber \\
    &= \langle \tilde{\vh} \circ \vf \circ \exp(-\eta \, \tilde{\vh} \circ \vf), \mD \circ \exp(-\vh \circ \vf)  \rangle \nonumber \\
    &= Z_h \exp(-\eta) (1 - R_{\gD_h}(\tilde{h})) - Z_h \exp(\eta) R_{\gD_h}(\tilde{h}) = 0
    .
\end{align}
Solving the last equation gives
\begin{equation}
\label{eq:optimum_eta_exp}
    \eta = \frac{1}{2} \log(\frac{1}{R_{\gD_h}(\tilde{h})} - 1)
    .
\end{equation}
\end{proofEnd}

Let $h_t$ be the final classifier obtained after $t$ updates according to the updating rule of Lemma~\ref{lemma:boosting}. An analysis similar to the analysis of AdaBoost shows $R_\gD(h_t) \le \exp(-\frac{(1 - 2\epsilon)^2 t}{2})$. This method, despite the exponential convergence rate, is not practical for two reasons. First, it is computationally hard as reweighting a distribution requires the calculation of a normalization factor which is a sum over the whole domain. Second, it requires sampling from $\gD$ which might not be possible.

In summary, gradient-based updates guarantee convergence of the updated classifier to the true classifier; however, they either suffer from slow convergence or computational hardness.

\section{More on experiments}
\label{sec:experiments_datails}

We elaborate on the details of datasets, models, and further observations from the simulation of path dynamic benchmarks in this section.

\subsection{Data and models}
\label{subsec:data}
The CIFAR-10 dataset contains $60,000$ of $32 \times 32$~color images in $10$~different classes, commonly used as a static benchmark for image classification. We use a shallow feed-forward Convolutional Neural Network (CNN) consisting of two convolution layers (with $32$ and $64$~filters), interleaved with two max-pooling layers, followed by a dropout layer and a dense layer with $10$~units at the output, increasing total number of trainable parameters to $40$k. We use the same architecture to train a base model and train classifiers in drawing rollouts. The base classifier achieves $73\%$~accuracy after $30$~epochs on the training data, reasonably above the chance level of about $10\%$. 

The Stanford Natural Language Inference (SNLI) corpus (version 1.0) is a popular NLI benchmark consisting of $570$k human-written English sentence pairs manually labeled for balanced classification with the labels entailment, contradiction, and neutral. We restrict our simulation to a $50$k random subset of this data. Our model is a slightly modified model of~\citet{snli} where words are represented with pre-trained $100$d GloVe\footnote{\url{http://nlp.stanford.edu/data/glove.6B.zip}} vectors, and two parallel Bidirectional Long Short-Term Memory layers (BiLSTM) are used to extract sentence embeddings. The concatenated embeddings are then fed into three more hidden dense layers (each has $128$~units) before going to the last dense layer with $3$~units. The model has a total number of $120$k trainable parameters, and the base model achieved an accuracy of $68\%$ on the training data and $61\%$ on the test data, consistent with the original study when the number of samples was limited.

\subsection{Further observations}
\label{subsec:further_observations}
The observations we had in Section~\ref{sec:experiments} raise another question: what makes a rollout a bad rollout? Do bad rollouts share a common characteristic? We hypothesize the more similar a rollout to the bad sequence constructed in the proof of Theorem~\ref{thm:path_realizable_approx_rm_lowerbound}, the worse its final risk. A unique feature of the negative example in Theorem~\ref{thm:path_realizable_approx_rm_lowerbound} is that initial classifiers in the sequence cleverly choose their error sets to only overlap on a fixed part of the distribution, which will turn out to be the error set of the majority. We define a score that captures this behavior without being directly related to the accuracy of the final majority vote model. Let $E_m$ be the error set for the majority vote classifier and $E_{h_t}$ be the error set of the round~$t$~classifier. Then for every pair of distinct rounds~$t_1, t_2 < T$ of a rollout, define $z_{t_1, t_2} = \frac{|E_{h_{t_1}} \cap E_{h_{t_2}} \cap E_m|}{|E_{h_{t_1}} \cap E_{h_{t_2}}|}$. We use $z_T=\frac{1}{T (T - 1)}\sum_{t_1=0}^{T-1} \sum_{t_2=0, t_2\neq t_1}^{T-1} z_{t_1, t_2}$ to quantify similarity of a rollout to the theoretical negative example. The proof of Theorem~\ref{thm:path_realizable_approx_rm_lowerbound} shows that as long as $T \le \frac{2}{\epsilon}$, the theoretical negative example gives $z_T=1$.

Figure~\ref{fig:prob_common_error_cnn_cifar10} depicts the risk of the majority vote at the end of the rollout (round 25) vs. $z_4$ score. The correlation is positive and statistically significant (Pearson~$r=0.46$, $p < 0.001$). So, the more similar a rollout to the theoretical bad example in terms of the defined score, the more likely it will be a bad rollout. Interestingly, at round~$4$, the risk of the majority vote classifier can barely explain the risk after $25$ rounds (Pearson~$r=0.18$, $p > 0.05$). This shows our negative example construction has not only introduced a worst-case example but might have identified an important characteristic that naturally appears in dynamic benchmarks and limits their effectiveness.

Figure~\ref{fig:prob_common_error_light_bowman_snli50k} also shows a significant positive correlation (Pearson~$r=0.68$, $p < 0.001$) between the final risk of a rollout and its similarity to the theoretical negative example measured early at round~$6$. At this round, the risk of the current majority vote classifier is barely informative about the risk after $20$~rounds (Pearson~$r=-0.27$, $p > 0.05$). Once more, it shows the negative example constructed theoretically can explain the natural failure of dynamic benchmarking in different contexts.

\begin{figure}[t]
\centering
\begin{subfigure}[t]{0.49\linewidth}
    \centering
    \includegraphics[width=0.74\linewidth]{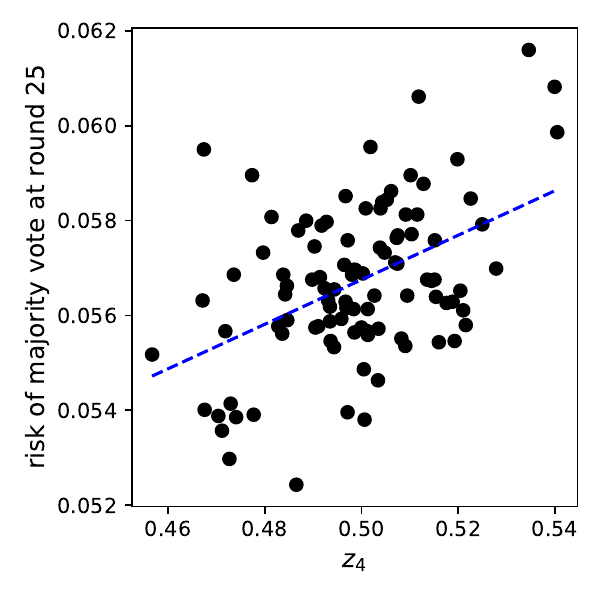}
    \caption{CIFAR-10}
    \label{fig:prob_common_error_cnn_cifar10}
\end{subfigure}
\hfill
\begin{subfigure}[t]{0.49\linewidth}
    \centering
    \includegraphics[width=0.74\linewidth]{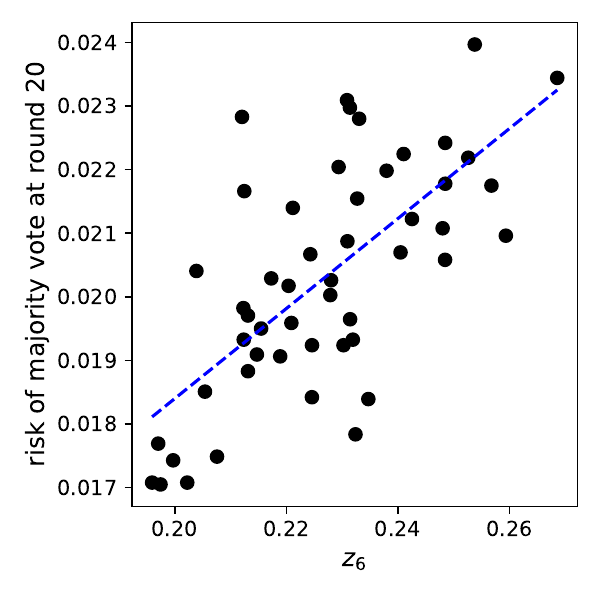}
    \caption{SNLI}
    \label{fig:prob_common_error_light_bowman_snli50k}
\end{subfigure}
\caption{Similarity to the theoretical negative example measured by $z_T$ can partially explain the risk of the majority vote classifier of all rounds. The blue dashed line is the minimum squared error fit.}
\end{figure}

\section{Additional statements and proofs}

\begin{corollary}
\label{cor:path_realizable_rm_cor}
Under conditions of Lemma~\ref{lemma:path_realizable_rm_num_err}, let $\hat{h}$ be one of the $h_t$s selected uniformly at random. For any $\delta>0$ and $\alpha>0$, if $T \ge \frac{1}{\delta \alpha}$, with probability at least $1-\delta$, $\hat{h}$ is $\alpha$-accurate.
\end{corollary}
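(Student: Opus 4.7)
The plan is to note that this corollary follows almost immediately from Lemma~\ref{lemma:path_realizable_rm_num_err} by a counting argument combined with the fact that $\hat{h}$ is chosen uniformly at random from the $T$ classifiers produced by the benchmark.

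First I would apply Lemma~\ref{lemma:path_realizable_rm_num_err} directly: under the stated hypotheses (realizable setting, perfect risk minimizer, support of $\gD_0$ contained in support of $\gD$, uniform mixture weights), the number of indices $t < T$ for which $R_\gD(h_t) > \alpha$ is at most $1/\alpha$. Call this set of ``$\alpha$-bad'' indices $B \subseteq \{0, 1, \dots, T-1\}$, so $|B| \le 1/\alpha$.

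Next, since $\hat{h}$ is selected uniformly at random among $h_0, \dots, h_{T-1}$, the probability that $\hat{h}$ is $\alpha$-bad is exactly $|B|/T \le 1/(\alpha T)$. Plugging in the assumed lower bound $T \ge 1/(\delta \alpha)$ gives
\[
\Pr\big[R_\gD(\hat{h}) > \alpha\big] \;\le\; \frac{1}{\alpha T} \;\le\; \delta,
\]
so with probability at least $1 - \delta$ the sampled classifier satisfies $R_\gD(\hat{h}) \le \alpha$, i.e.\ it is $\alpha$-accurate.

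There is no real obstacle here; the whole content sits in Lemma~\ref{lemma:path_realizable_rm_num_err}, and the corollary is just a Markov-style observation that a deterministic bound on the count of bad classifiers immediately yields a high-probability guarantee under uniform sampling. The only thing to be careful about is stating the randomness correctly (the randomness is purely over the uniform choice of index, conditional on the sequence produced by the benchmark), and making sure the ceiling/integrality of $1/\alpha$ does not create issues --- but since we only use $|B| \le 1/\alpha$ as an upper bound, no integrality subtlety arises.
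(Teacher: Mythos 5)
Your argument is correct and is exactly the paper's proof: apply Lemma~\ref{lemma:path_realizable_rm_num_err} to bound the number of $\alpha$-bad classifiers by $1/\alpha$, then note that uniform selection makes the probability of drawing a bad one at most $\frac{1/\alpha}{T} \le \delta$. No differences worth noting.
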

\begin{proof}
From Lemma~\ref{lemma:path_realizable_rm_num_err}, the probability that $\hat{h}$ is $\alpha$-bad is less than $\frac{1/\alpha}{T} \le \delta$.
\end{proof}

\printProofs

\begin{thm}
\label{thm:path_realizable_approx_rm_lowerbound_specific_h}
For any $\epsilon$-approximate risk minimizer~$\gA$ with $\frac{1}{\epsilon} \in \mathbb{N}$, and path dynamic benchmark with $L\ge3$ rounds of model building and arbitrary mixture weights, 
there exists a hypothesis class~$\gH$ with ${\rm VCdim}(\gH) \le 4$ such that for any true classifier~$f \in \gH$, there exists an underlying distribution~$\gD$ where for any initial distribution~$\gD_0$ with $\supp(\gD_0) \subseteq \supp(\gD)$ that satisfies $\gD(x_2) \ge \gD(x_1) \iff \gD_0(x_2) \ge \gD_0(x_1)$, there exists a sequence~$(h_t)_{t=0}^{T-1}$ of classifiers consistent with the path dynamic benchmark where a similar lower-bound as Theorem~\ref{thm:path_realizable_approx_rm_lowerbound} holds.
\end{thm}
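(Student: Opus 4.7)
The plan is to exhibit a concrete hypothesis class on $\R$ whose VC dimension is at most $4$ and then embed the construction of Theorem~\ref{thm:path_realizable_approx_rm_lowerbound} into it. I would take $\gH$ to be the class of binary classifiers on $\R$ whose positive region is a union of at most two closed intervals, with the empty union identified with the constant $-1$ classifier. It is standard that this class has VC dimension exactly $4$: one shatters four points using singleton intervals, whereas any labeling of five ordered points that contains three ``runs'' of $+1$ values cannot be realized by at most two intervals.

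Given any $f \in \gH$, let $F \subset \R$ be its positive region. Because $F$ is a bounded union of at most two intervals, the complement $\R \setminus F$ always contains an unbounded interval $R$. I take $\supp(\gD)$ to consist of $d = 8/\epsilon^2$ points placed inside $R$, so that $f \equiv -1$ on the entire support of $\gD$. I define $\gD$ to be uniform on these points, which is compatible with the monotonicity hypothesis on $\gD_0$ and reproduces the ordering structure assumed in the proof of Theorem~\ref{thm:path_realizable_approx_rm_lowerbound} (namely, that the axes are simultaneously ascending under $\mD_0$ and $\mD$).

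I would then replay the construction of that proof verbatim, introducing sets $\gK_0 = [1, k']$ and $\gK_t = \gK \cup (k' + (t-1)(k'-k), k' + t(k'-k)]$ for $1 \le t < T = 2/\epsilon$, with $k' = 2k/\epsilon$ and $k$ chosen so that $\gD(\gK) = \epsilon^2/8$. The crucial observation is that each $\gK_t$ is a union of at most two intervals of $\R$, so the classifier $h_t$ whose positive region equals $\gK_t$ lies in $\gH$; and since $f \equiv -1$ on $\supp(\gD)$, the error set of $h_t$ on the support equals exactly $\gK_t$. The remaining classifiers for $t \ge T$ are obtained by the same reassignment function $\phi$ used in the main proof, reusing earlier $h_t$'s so no additional expressibility is needed.

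The main obstacle to verify is that $h_t$ need not agree with $f$ outside $\supp(\gD)$ (for example, on $F$ one has $h_t = -1$ while $f = +1$), but this is immaterial: by induction on the rounds, every $\gD_t$ generated by path dynamic benchmarking is a mixture of $\gD$ and of conditional distributions $\bar{\gD}_{t'} = \gD|_{h_{t'}\neq f}$, all of which are supported inside $\supp(\gD)$, so the risk minimizer $\gA$ never sees the external mismatch and the approximate risk constraint $R_{\gD_t}(h_t) \le \epsilon$ still holds. With this check, every inequality used in the proof of Theorem~\ref{thm:path_realizable_approx_rm_lowerbound} — including the bound $\gD_0(\gK_{T-1}) \le \epsilon/2$ that requires the joint monotonicity of $\mD_0$ and $\mD$ — continues to hold under the weaker monotonicity assumption of the present theorem, and the same $\Omega(\epsilon^2)$ lower bound on $R_\gD(\maj(h_0,\dots,h_{L-1}))$ follows.
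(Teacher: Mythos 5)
Your construction follows essentially the paper's own route: the same union-of-two-intervals hypothesis class with VC dimension $4$, the same placement of the $d=8/\epsilon^2$ support points in a region where $f$ is constant so that, on $\supp(\gD)$, the error set of each $h_t$ is exactly the index set $\gK_t$ from the proof of Theorem~\ref{thm:path_realizable_approx_rm_lowerbound}, and the same reuse map $\phi$ for rounds $t\ge T$. Your explicit check that the disagreement of $h_t$ with $f$ outside $\supp(\gD)$ is invisible to every $\gD_t$ (all mixture components are $\gD_0$ or conditionals of $\gD$, hence supported inside $\supp(\gD)$) is a point the paper leaves implicit, and it is correct. The one substantive deviation is taking $\gD$ exactly uniform. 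Because the compatibility condition in the statement is a biconditional, a uniform $\gD$ forces $\gD_0(x_1)=\gD_0(x_2)$ for every pair of support points, so the only admissible initial distribution is $\gD_0=\gD$ itself: your argument proves the statement as literally written, but it renders the ``for any $\gD_0$'' quantifier vacuous, which is exactly the extra content this theorem is meant to carry beyond Theorem~\ref{thm:path_realizable_approx_rm_lowerbound}. The paper avoids this by setting $\gD(x)=\tfrac1d+\alpha(x)$ with $\alpha$ strictly ascending and summing to zero, so that every order-compatible (ascending) $\gD_0$ is admissible; the verification then uses the bound $\gD_0(y)\le \tfrac{1}{d-y+1}$ in place of exact uniformity, and the $\epsilon^2/8$ bound is recovered in the limit $\alpha\to 0$, which is why the statement promises only ``a similar lower-bound.'' Your write-up would be aligned with the intended strength of the theorem by applying the same small tilt to $\gD$ and noting that your inequalities $\gD_0(\gK_t)\le \epsilon/2$ and $\gD(\gK)/\gD(\gK_t)\le\epsilon/2$ survive it.
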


\begin{proof}
The proof is straightforward given the results obtained so far in the proof of Theorem~\ref{thm:path_realizable_approx_rm_lowerbound}. The way we constructed $\gK_t$s is equivalent to selecting two subsets of $\R^+$ and changing their predicted labels. So, let $\gH'$ be the class of such functions, i.e., the class of unions of two intervals (${\rm VCdim}(\gH')=4$). For any $f\in\gH'$, we select $d$ points from the real line such that $f(x)$ is all $1$ or all $-1$. Let $\gX_d \in \R^d$ be the set of the selected points. Then we induce a probability distribution $\gD$ on $\gX_d$ such that $\gD(x)=\frac{1}{d}+\alpha(x)$ where $\alpha(x)$ is an ascending function of $x$ and $\sum_{x\in\gX_d} \alpha(x)=0$. Then all of our arguments in the first part of the proof will be true for any $\gD_0$ which is ascending or descending w.r.t $x$. In the limit we have
\begin{equation}
    \lim_{\alpha \rightarrow 0} R_\gD \big(\maj(h_0, \cdots, h_{L-1}) \big) \ge \lim_{\alpha \rightarrow 0} \gD(\gK) = \frac{\epsilon^2}{8}
    .
\end{equation}
This completes the proof.
\end{proof}

\begin{thm}
\label{thm:hier_realizable_approx_rm_lowerbound_specific_h}
For any $\epsilon$-approximate risk minimizer~$\gA$ with $\frac{1}{\epsilon} \in \mathbb{N}$, and hierarchical dynamic benchmark with depth-$2$ and width-$3$~(Figure~\ref{fig:hier_graph}) with arbitrary mixture and majority weights, there exists a hypothesis class~$\gH$ with ${\rm VCdim}(\gH) \le 6$ such that for any true classifier~$f \in \gH$, there exists an underlying distribution~$\gD$ where for any initial distribution~$\gD_0$ with $\supp(\gD_0) \subseteq \supp(\gD)$ that satisfies $\gD(x_2) \ge \gD(x_1) \iff \gD_0(x_2) \ge \gD_0(x_1)$, there exists classifiers consistent with the hierarchical dynamic benchmark for which a similar lower-bound as Theorem~\ref{thm:hier_realizable_approx_rm_lowerbound} holds.
\end{thm}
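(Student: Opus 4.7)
The plan is to adapt the argument used for Theorem~\ref{thm:path_realizable_approx_rm_lowerbound_specific_h} to the depth-$2$ width-$3$ hierarchical setting. My starting observation is that each of the error sets $\gK_0,\gK_1,\ldots,\gK_8$ constructed in the proof of Theorem~\ref{thm:hier_realizable_approx_rm_lowerbound} is a union of at most three contiguous intervals along the integer line (sets such as $\gK_4=[1]\cup(\tfrac{1}{\epsilon},\tfrac{2}{\epsilon}-1]\cup(\tfrac{1}{\epsilon^2}+\tfrac{1}{\epsilon}-1,\tfrac{2}{\epsilon^2}-1]$ realize the worst case). Accordingly, I would take $\gH$ to be the class of binary classifiers on $\R$ whose disagreement region with any fixed reference classifier is a union of at most three intervals. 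This class has VC dimension at most~$6$, by the classical $2k$ upper bound for unions of $k$ intervals.

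Next, given any true classifier $f\in\gH$, I would pick $d=\tfrac{2}{\epsilon^3}$ real numbers from a single region of $\R$ on which $f$ is constant (say $+1$), forming the finite domain $\gX_d$. I would then define a distribution $\gD(x)=\tfrac{1}{d}+\alpha(x)$, where $\alpha$ is a strictly increasing perturbation of $x$ with $\sum_{x\in\gX_d}\alpha(x)=0$ and $|\alpha(x)|<\tfrac{1}{d}$. The biconditional ordering assumption $\gD(x_2)\ge\gD(x_1)\iff\gD_0(x_2)\ge\gD_0(x_1)$ then forces $\gD_0$ to agree with $\gD$ on the induced linear order of $\gX_d$, which is exactly the ascending-ordering hypothesis used at the start of the proof of Theorem~\ref{thm:hier_realizable_approx_rm_lowerbound}.

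Given this setup, I would reuse the construction of $\gK_0,\ldots,\gK_8$ and the induced $\gK_{g_t}$ from the proof of Theorem~\ref{thm:hier_realizable_approx_rm_lowerbound} verbatim. Since $f$ is constant on $\gX_d$ and each $\gK_t$ is a union of at most three intervals, the classifier $h_t$ obtained by flipping $f$ on $\gK_t$ lies in $\gH$, and likewise $g_t=\maj(h_{3t},h_{3t+1},h_{3t+2})$ is representable as a union of a few intervals, which the proof only uses through the sets $\gK_{g_t}$. The consistency constraints imposed by $\gA$ being an $\epsilon$-approximate risk minimizer at every step depend only on the common ordering of $\mD_0$ and $\mD$ along $\gX_d$ together with the disjointness/ratio pattern of the $\gK_t$s, both of which are preserved by our construction.

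Finally, letting $\alpha\to0$ so that $\gD$ converges to $\unif(\gX_d)$, I would conclude
\[
\lim_{\alpha\to0} R_\gD\bigl(\maj(g_0,g_1,g_2)\bigr)\;\ge\;\lim_{\alpha\to0}\gD(\gK_{g_0}\cap\gK_{g_1}\cap\gK_{g_2})\;=\;\frac{1}{d}\;=\;\frac{\epsilon^3}{2},
\]
and by continuity the same bound holds for all sufficiently small $\alpha>0$. The main (and fairly mild) obstacle is checking that the perturbation $\alpha$ can be chosen small enough that the strict inequalities $\gD(\gK_{g_t})/\gD(\gK_\tau)\le\epsilon$ and $\gD_0(\gK_8)\le\epsilon$ from the original proof survive exactly for nonzero $\alpha$; this follows from the fact that the original inequalities have a constant slack gap when $\gD=\unif(\gX_d)$, so any sufficiently small monotone perturbation preserves them.
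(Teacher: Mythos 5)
Your proposal matches the paper's own argument essentially step for step: the same three-interval hypothesis class with ${\rm VCdim}\le 6$, the same perturbed distribution $\gD(x)=\frac{1}{d}+\alpha(x)$ whose ordering condition pins down $\gD_0$, verbatim reuse of the $\gK_t$ construction from Theorem~\ref{thm:hier_realizable_approx_rm_lowerbound}, and the $\alpha\to 0$ limit giving the $\frac{\epsilon^3}{2}$ bound. Your added remark that the constant slack lets the inequalities survive for small nonzero $\alpha$ is a harmless refinement of the same proof, not a different route.
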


\begin{proof}
Given the results obtained in the proof of Theorem~\ref{thm:hier_realizable_approx_rm_lowerbound}, the proof of this theorem is pretty straightforward. The way we chose $\gK_t$s in the proof of Theorem~\ref{thm:hier_realizable_approx_rm_lowerbound} is equivalent to choosing three intervals from the real line. So, let $\gH'$ be such class of function (${\rm VCdim}(\gH')=6$). For any $f\in\gH'$, we select $d$ points from the real line such that $f(x)$ is all $1$ or all $-1$. Let $\gX_d \in \R^d$ be the set of the selected points. Then we induce a probability distribution $\gD$ on $\gX_d$ such that $\gD(x)=\frac{1}{d}+\alpha(x)$ where $\alpha(x)$ is an ascending function of $x$ and $\sum_{x\in\gX_d} \alpha(x)=0$. All of our arguments in the first part of the proof hold for any $\gD_0$ which is ascending or descending w.r.t $x$. In the limit we have
\begin{equation}
    \lim_{\alpha \rightarrow 0} R_\gD(\maj(g_0,g_1,g_2))
    \ge \lim_{\alpha \rightarrow 0} \gD([1])
    = \frac{\epsilon^3}{2}
    ,
\end{equation}
which completes the proof.
\end{proof}

\end{document}